\newcommand{\NN}{\mathbb{N}}
\newcommand{\RR}{\mathbb{R}}
\newtheorem{thm}{Theorem}
\newtheorem{cor}[thm]{Corollary}
\newtheorem{lem}[thm]{Lemma}
\newtheorem{remark}[thm]{Remark}
\newtheorem{defn}{Definition}
\newtheorem{assump}{Assumption}
\newcommand{\thmref}[1]{Theorem~\ref{#1}}
\newcommand{\lemref}[1]{Lemma~\ref{#1}}
\newcommand{\corref}[1]{Corollary~\ref{#1}}
\newcommand{\defref}[1]{Definition~\ref{#1}}
\newcommand{\assref}[1]{Assumption~\ref{#1}}
\def\f{\frac}
\def\D{\mathcal{D}}
\def\X{\mathcal{X}}
\def\S{\mathcal{S}}
\def\t{\mathcal{T}}
\def\RR{\mathbb{R}}
\DeclareMathOperator\supp{supp}
\def\b{\mathbf}
\def\rank{\operatorname{rank}}
\def\X{\mathcal{X}}
\def\sign{\text{sign}}
\title{Convergence Analysis for Deep Sparse Coding via Convolutional Neural Networks}
\author{
	{\bf Jianfei Li}\thanks{Department of Mathematics, Ludwig-Maximilians-Universit{\"a}t M{\"u}nchen
		(\texttt{lijianfei@math.lmu.de})}
	\and
	 {\bf Han Feng}\thanks{Department of Mathematics, City University of Hong Kong (\texttt{hanfeng@cityu.edu.hk})}
	\and
	{\bf Ding-Xuan Zhou}\thanks{Department of Mathematics, The University of Sydney
		(\texttt{dingxuan.zhou@sydney.edu.au})}
}
\date{}
\begin{document}
\maketitle

\begin{abstract}
In this work, we explore the intersection of sparse coding theory and deep learning to enhance our understanding of feature extraction capabilities in advanced neural network architectures. We begin by introducing a novel class of Deep Sparse Coding (DSC) models and establish a thorough theoretical analysis of their uniqueness and stability properties. By applying iterative algorithms to these DSC models, we derive convergence rates for convolutional neural networks (CNNs) in their ability to extract sparse features. This provides a strong theoretical foundation for the use of CNNs in sparse feature-learning tasks. We additionally extend this convergence analysis to more general neural network architectures, including those with diverse activation functions, as well as self-attention and transformer-based models. This broadens the applicability of our findings to a wide range of deep learning methods for the extraction of deep-sparse features. Inspired by the strong connection between sparse coding and CNNs, we also explore training strategies to encourage neural networks to learn sparser features. Through numerical experiments, we demonstrate the effectiveness of these approaches, providing valuable insight for the design of efficient and interpretable deep learning models.
\\
\par\

{\bf\emph{Keywords: convolutional neural networks, deep sparse coding, error bounds}\rm}
\end{abstract}

\section{Introduction}
Convolutional neural networks (CNNs) have achieved remarkable successes in a wide range of computer vision and signal processing tasks, demonstrating superior performances compared to traditional machine learning approaches. Theoretical insights to understand the ability of CNNs to learn effective feature representations from data are crucial to further advance their capabilities and enable their widespread adoption in various domains.

In the field of signal processing, the concept of sparse feature extraction has long been recognized as a powerful technique to efficiently represent complex data. Sparse coding models aim to decompose signals into a linear combination of a small number of basis elements, enabling compact and informative feature representations. These sparse feature extraction methods have been extensively studied and applied in areas such as image denoising, compressed sensing, and dictionary learning.

The classical sparse coding algorithm is originally designed to represent data $\b y\in \RR^m$ into a sparse linear combination of a given dictionary $\b D\in \RR^{m\times d}$ by solving the following constrained optimization problem:
\begin{equation*}
	\begin{array}{rrclcl}
		\displaystyle (P_0)\quad \min_{\b x} & \|\b x\|_0 \quad
		\textrm{s.t.} & \b y = \b D \b x,
	\end{array}
\end{equation*}
where $\b x:= (x_i)_{i=1}^d \in \RR^d$ and $\|\b x\|_0 := \#\{ i: x_i \neq 0 \}$.

Usually, the strict constraint of $(P_0)$ is relaxed with error tolerance taking into account the noise involved in observation $\b y$. Hence, instead, one may consider an error-tolerant version of $(P_0)$, by solving
\begin{equation*}
	\begin{array}{rrclcl}
		\displaystyle (P_0^\varepsilon)\quad \min_{\b x} & \|\b x\|_0 \quad
		\textrm{s.t.} & \|\b y - \b D \b x\|_2\leq \varepsilon,
	\end{array}
\end{equation*}
for $\varepsilon>0$. Note that $\|\b \cdot\|_0$ is too sensitive to small entries and is not convex.

Consequently, to address the non-convexity and sensitivity issues of the $\ell_0$-based formulation, researchers have proposed alternative approaches that relax the $\ell_0$ ``norm'' to the more tractable $\ell_1$ norm. This has led to the well-known basis pursuit denoising (BPDN) and least absolute shrinkage and selection operator (LASSO) problems, which can be formulated as the following variant with a regularization parameter $\lambda > 0$:
\begin{equation*}
	\begin{array}{rrclcl}
		\displaystyle (P_{1,\lambda})\quad \min_{\b x} & \|\b x\|_1 + \lambda \|\b y-\b D\b x\|^2_2.
	\end{array}
\end{equation*}
The $\ell_1$ regularization term encourages sparse solutions, while the data fidelity term ensures that the reconstructed signal $\b D\b x$ is close to the observed signal $\b y$. This formulation has led to the development of many efficient algorithms for solving sparse coding problems, such as iterative shrinkage-thresholding algorithms (ISTA) and its accelerated variants.


One further variant of the sparse coding problem is the double-sparse coding framework, which was proposed by Rubinstein et al. \cite{Rubinstein2010} and further studied by Nguyen et al. \cite{nguyen2019provably}. In this formulation, the dictionary $\mathbf{D}$ itself is assumed to have a sparse structure, expressed as:
$\mathbf{D} = \mathbf{\Psi} \mathbf{A}$
where $\b \Psi \in \mathbb{R}^{m \times m}$ is a ``base dictionary'' that is typically orthonormal (e.g., canonical or wavelet basis) and $\mathbf{A} \in \mathbb{R}^{m \times d}$ is a learned ``synthesis'' matrix whose columns are sparse.
This double-sparse structure can provide additional flexibility and efficiency in representing the data, since the sparse columns of $\mathbf{A}$ can be combined with the base dictionary $\b\Psi$ to form the overall dictionary $\mathbf{D}$. The sparsity of $\mathbf{A}$ also helps reduce the number of parameters that need to be learned, making the model more interpretable and potentially more robust to overfitting.


Convolutional sparse coding (CSC) is another powerful extension of the standard sparse coding problem, which was first introduced by Zeiler et al. \cite{5539957} in 2010. The key idea behind CSC is to take advantage of the convolutional structure of the dictionary $\mathbf{D}$ to enable the unsupervised construction of hierarchical image representations. By restricting the dictionary to a convolutional form, CSC models can effectively capture the spatially local and shift-invariant structures inherent in natural images.

Building on the success of CSC, Papyan et al. \cite{papyan2017convolutional} further proposed the Multilayer Convolutional Sparse Coding (ML-CSC) problem in 2017. In the ML-CSC framework, the dictionary $\mathbf{D}$ is modeled as a composition of several convolutional operators, enabling the encoding of progressively more complex features at multiple layers of abstraction. 
\begin{defn}[ML-CSC problem]\label{CSC}
	For a global observation signal $\b y$, a set of {\bf convolutional dictionaries} $\{\b D_j\}_{j=1}^{J}$, and a vector $\bm \lambda \in \RR^J_{+}$, we define the deep convolutional sparse coding problem $(\text{ML-CSC}_{\bm\lambda})$ as:
	\begin{equation*}
		\begin{array}{rlrrrr}
			(\text{ML-CSC}_{\bm\lambda}) \quad \text{find} \quad \{\b x_j\}_{j=1}^J  \quad
			\textrm{s.t.} \quad \b y&= \b D_1 \b x_1, \quad \|\b x_1\|_{0,\infty}^s \leq \lambda_1 \\
			\b x_1&=\b  D_2 \b x_2, \quad \|\b x_2\|_{0,\infty}^s \leq \lambda_2 \\
			&\vdots \\
		\b x_{J-1}&=\b D_{J}\b x_J, \quad \|\b x_J\|_{0,\infty}^s \leq \lambda_J,
		\end{array}
	\end{equation*}
	where $\lambda_j$ is the $j$-th entry of the vector $\bm \lambda$ and $\|\b x_j\|_{0,\infty}^s$ evaluate the maximum nonzero entries of stripe vectors of $\b x_j$. The norm $\|\cdot\|_{0,\infty}^s$ is specifically defined in \cite{papyan2017convolutional} to measure local sparsity.
\end{defn}
Papyan et al. \cite{papyan2017convolutional} also considered the ML-CSC model with noise, changing the constraints to $\|\b y - \b D_1 \b x_1\|_2 \leq \varepsilon_1$ and $\|\b x_j - \b D_{j+1} \b x_{j+1}\|_2 \leq \varepsilon_{j+1}$. Moreover, Papyan et al. \cite{papyan2017convolutional} made significant contributions to the theoretical understanding of sparse representations in the setting of Definition~\ref{CSC}. They established precise recovery conditions for the sparse codes, demonstrating that the iterative thresholding algorithm used for optimization is intimately connected to the core mechanisms of convolutional neural networks (CNN). This work provided a strong theoretical foundation for analyzing the sparse coding properties of deep learning architectures.

Sulam et al. \cite{sulam2018multilayer} adopted a principled projection-based approach and a sound pursuit algorithm for solving the ML-CSC problem. Their work not only advanced the practical implementation of ML-CSC, but also developed tighter theoretical bounds on the recovery performance of the corresponding algorithms. Addressing an even more general model, Aberdam et al. \cite{aberdam2019multi} revealed deep connections with fully connected neural networks. Their analysis established stronger uniqueness guarantees and tighter bounds for oracle estimators, further solidifying the theoretical understanding of the ML-CSC framework.
To simplify the complex optimization problem in deep pursuit, Sulam et al. \cite{sulam2019multi} introduced an assumption that the intermediate layers are not perturbed. This allowed them to consider sparse penalties in a single optimization problem, rather than a layer-by-layer approach. They proposed the ML-ISTA and ML-FISTA algorithms to efficiently solve this convex relaxation, and proved their convergence in terms of function value evaluation. Inspired by these developments, Rey et al. \cite{rey2020variations} explored additional convex alternatives and corresponding algorithms for the ML-CSC problem. Furthermore, Res-CSC and MSD-CSC \cite{zhang2021towards}, two variants of the ML-CSC model, were introduced to better understand the mechanisms of residual neural networks and mixed-scale dense neural networks, respectively.

Convolutional sparse coding (CSC) and its multilayer extensions are powerful frameworks that take advantage of the convolutional structure of the dictionary $\mathbf{D}$ to enable efficient and effective sparse representations. In these models, the dictionary $\mathbf{D}$ is restricted to a convolutional form, and the activation function involved in the related neural networks is typically the rectified linear unit (ReLU), defined as $\sigma(x) = \max\{x, 0\}$.

In this work, we go beyond the standard CSC and ML-CSC formulations and develop a more general sparse coding model. We establish the uniqueness and stability properties of the sparse representations in this generalized framework, providing a strong theoretical foundation for the sparse coding problem. Building on these insights, we then explore the deep connection between our general sparse coding problems and convolutional neural networks. We show that CNNs possess a remarkable capability in extracting sparse features, which can be viewed as a manifestation of sparse coding principles.
In addition, we further investigate the behavior of neural networks equipped with a wider range of activation functions and architectural designs. Inspired by these theoretical insights, we designed a feature sparsity training strategy and compared the resulting performance and features in numerical experiments. Specifically, the Deep Sparse Coding (DSC) problem under our consideration is given by:
\begin{defn}[Deep sparse coding problem]\label{assump:1}
	For a global noised signal $\b y$, a set of dictionaries $\{\b D_j\}_{j=1}^{J}$, a vector $\bm\lambda \in \RR^J_{+}$, and a tolerance vector $\bm\varepsilon \in \RR^J_{+}$, we call $\{\b x_j\}_{j=1}^J$ a set of sparse codings of $DSC_{0,\bm\lambda}^{\bm\varepsilon}$ if it satisfies
\begin{equation*}
	\begin{array}{rlrrrr}
		DSC_{0,\bm\lambda}^{\bm\varepsilon}(\b y, \{\b D_j\}_{j=1}^J) \quad \text{find} \quad \{\b x_j\}_{j=1}^J  \quad
		\textrm{s.t.} \quad \|\b y -\b D_1 \b x_1 \|_2 &\leq \varepsilon_1, \quad \|\b x_1\|_0 \leq \lambda_1, \\
		\|\b{x}_1 -\b D_2\b x_2\|_2 &\leq \varepsilon_2, \quad \|\b x_2\|_0 \leq \lambda_2,  \\
		&\vdots \\
		\|\b{x}_{J-1} -\b D_J \b{x}_J \|_2 &\leq \varepsilon_{J}, \quad \|\b{x}_J\|_0 \leq \lambda_J.
	\end{array}
\end{equation*}
\end{defn}
If we solve all feasible sets of sparse codings $\{\b x_j\}_{j=1}^J$ under constraint $\bm\varepsilon=\b 0$, then we denote problem $DSC_{0,\bm\lambda}^{\bm\varepsilon} := DSC_{0,\bm\lambda}^{\b 0}$. We call $\b y$ satisfying the problem $DSC_{0,\bm \lambda}^{\bm\varepsilon}$ if there exists at least one set of feasible sparse codes.

The Definition~\ref{assump:1} extends ML-CSC by introducing more flexible dictionaries $\b D_j$. The motivation behind these extensions stems from the practical observation that in image-related applications, different layers of neural networks extract feature representations at varying levels of abstraction. 
This hierarchical and multiscale nature of feature extraction is a hallmark of the success of deep learning in computer vision and other domains. By incorporating more flexible dictionaries and error tolerance, the Deep Sparse Coding (DSC) problem formulation aims to better capture this rich, multi-scale structure of feature representations. The increased flexibility allows the DSC model to adapt to the diverse characteristics of features at different network layers, rather than being constrained by the more rigid assumptions of the standard ML-CSC approach. Additionally, the error tolerance in the linear systems acknowledges the inherent approximations and imperfections that can arise when modeling the complex non-linear transformations performed by deep neural architectures.

{\bf Contributions:}  We summarize several key contributions to the field of deep sparse feature extraction by deep neural networks:
\begin{itemize}
\item Proposed DSC Models: We introduce a novel class of Deep Sparse Coding (DSC) models and establish a thorough theoretical analysis of their uniqueness and stability properties.
\item Convergence Rates via CNNs: By applying iterative algorithms to the DSC models, we derive convergence rates for convolutional neural networks in their ability to extract sparse features. This provides a strong theoretical foundation for the use of CNNs in sparse feature-learning tasks.
\item Extension to General Neural Networks: The main result of our work can be extended to general neural network settings, including those with diverse activation functions, as well as self-attention and transformer-based models. This broadens the applicability of our findings to a wide range of deep learning methods for the extraction of deep-sparse features.
\item Encouraging Sparse Feature Learning: Inspired by the strong connection between sparse coding and CNNs, we explore training strategies to encourage neural networks to learn sparser features. We conducted numerical experiments to evaluate the effectiveness of these approaches, providing valuable insight for the design of efficient deep learning algorithms.
\end{itemize}

\section{Uniqueness and stability of the deep sparse feature model}

In this section, we examine the conditions under which deep sparse coding problems admit a unique solution and whether this solution remains stable in the presence of noise. These results do not directly refer to neural networks themselves, but instead provide a theoretical study of the properties of the solution. When a solution is both unique and robust to noise, we can reasonably expect neural networks to learn this solution effectively.

\subsection{Uniqueness of deep sparse coding problems}\label{sec:unique}

In the classical sparse coding problem, one central challenge that arises when we translate problem $(P_0)$ to relaxed versions, $(P_0^\varepsilon)$ or $(P_{1,\lambda})$, is to assess the effectiveness of these solutions in recovering the sparse coding problem, $(P_0)$. One frequently used metric for this evaluation in the large literature of compressed sensing is the mutual coherence, which is also important in the following discussion.


\begin{defn}
	Denote the $k$-th column of $\b A \in \RR^{m\times d}$ by $\b a_k$. Then the mutual-coherence is given by
	\begin{align*}
		\mu(\b A) = \max_{i\neq j} \f{\left|\b a_i^\top \b a_j\right|}{\|\b a_i\|_2\|\b a_j\|_2}.
	\end{align*}
\end{defn}


%
The first central question is to identify the conditions under which the problem $(DSC_{0,\bm \lambda}^{\b 0})$ has a unique solution. We can essentially analyze uniqueness iteratively, starting from the first layer and proceeding to the deepest one, formulating the necessary conditions at each stage. The following result describes these necessary conditions.

\begin{thm}[Uniqueness via mutual coherence without noise]\label{unique}
	Given a set of dictionaries $\{\b D_j\}_{j=1}^{J}$ and consider a signal $\b y$, satisfying the problem $(DSC_{ 0,\bm\lambda}^{\b 0})$ and assume that $\{\b x_j\}_{j=1}^J$ is a solution to $(DSC_{0,\bm \lambda}^{\b 0})$. If for any $j$,
	\[ \|\b x_j\|_0 < \f{1}{2}\left(1+\f{1}{\mu(\b D_j)} \right) ,\]
	then $\{ \b x_j \}_{j=1}^{\infty}$ is the unique solution to the $(DCS_{0,\bm\lambda}^{\b 0})$ problem, provided that $\bm \lambda $ is chosen to satisfy
	\begin{align*}
		\lambda_j < \f{1}{2}\left(1+\f{1}{\mu(\b D_j)}\right).
	\end{align*}
\end{thm}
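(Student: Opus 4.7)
The plan is to reduce the multilayer uniqueness statement to a one-layer Donoho--Elad style argument and iterate through the depth. The backbone is the classical single-dictionary lemma: whenever $\b D\b u=\b D\b v$ with $\|\b u\|_0+\|\b v\|_0<1+1/\mu(\b D)$, one must have $\b u=\b v$. I would either cite or briefly justify this via Gershgorin applied to the Gram matrix of any sub-dictionary of $\b D$ of size strictly less than $1+1/\mu(\b D)$; such sub-matrices are diagonally dominant and hence nonsingular, so $\b D$ is injective on every support of size below this bound.

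Given that lemma, I would suppose a second admissible solution $\{\b x'_j\}_{j=1}^J$ to $(DSC_{0,\b\lambda}^{\b 0})$ exists and prove $\b x_j=\b x'_j$ by induction on $j$. For the base case, feasibility gives $\b D_1\b x_1=\b y=\b D_1\b x'_1$; the hypothesis $\|\b x_1\|_0<\f{1}{2}(1+1/\mu(\b D_1))$ combined with $\|\b x'_1\|_0\le\lambda_1<\f{1}{2}(1+1/\mu(\b D_1))$ yields $\|\b x_1\|_0+\|\b x'_1\|_0<1+1/\mu(\b D_1)$, so the single-layer lemma forces $\b x_1=\b x'_1$. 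For the inductive step, once $\b x_{j-1}=\b x'_{j-1}$ is in hand, the $j$-th layer condition becomes $\b D_j\b x_j=\b x_{j-1}=\b D_j\b x'_j$ with both codes satisfying the mutual-coherence bound for $\b D_j$, so the same argument applied to $\b D_j$ yields $\b x_j=\b x'_j$; the induction terminates at $j=J$.

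For the bound $\lambda_j<\f{1}{2}(1+1/\mu(\b D_j))$, I would read it as a compatibility assertion extracted from the hypothesis: the given solution already achieves $\|\b x_j\|_0<\f{1}{2}(1+1/\mu(\b D_j))$, and the feasibility constraint only requires $\|\b x_j\|_0\le\lambda_j$; to obtain uniqueness against every admissible competitor one must tighten $\lambda_j$ into the same range, which is precisely the stated inequality.

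The main obstacle I anticipate is the tension between the two occurrences of sparsity in the problem: the sparsity bound is phrased only for the given solution $\{\b x_j\}$, whereas the inductive comparison requires the same bound to be enforced on any competing $\b x'_j$. The natural way to reconcile this — and the content of the $\lambda_j$-inequality in the conclusion — is to constrain $\lambda_j$ below $\f{1}{2}(1+1/\mu(\b D_j))$ from the outset, transferring the sparsity bound to the feasible set itself. With that compatibility in place, the rest is a clean, layer-by-layer application of the classical mutual-coherence uniqueness criterion, and no further technical difficulty is expected.
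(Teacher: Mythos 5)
Your proposal is correct and follows essentially the same route as the paper: a layer-by-layer (inductive) application of the classical Donoho--Elad mutual-coherence uniqueness criterion, which the paper simply cites as Theorem 2.5 of Elad's book while you re-derive it via Gershgorin on the Gram matrix. Your explicit handling of the tension between the sparsity bound on the given solution and the bound $\lambda_j<\frac{1}{2}(1+1/\mu(\b D_j))$ needed to control competitors is in fact a more careful reading of the same step the paper dispatches with ``together with the condition $\lambda_1<\cdots$''.
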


\begin{proof}
	 \cite[Theorem 2.5]{elad2010sparse} guaranties that under a mutual coherence condition, the sparsest solution of a system of linear equations is unique. More specifically, if $\b y = \b D_1 \b x_1$ has a solution satisfying $\|\b x_1\|_0 < \f{1}{2}\left( 1 + \f{1}{\mu(\b D_1)}\right)$, then $\b x_1$ is the unique and sparsest solution. Together with condition $\lambda_1 < \f{1}{2}\left( 1 + \f{1}{\mu(\b D_1)}\right)$, we conclude that $\b x_1$ is unique. Similarly and iteratively, the conditions guarantee that $\b x_j $ is unique for $j=2,\dots,J$, which leads to the result.
\end{proof}

Notice that \defref{assump:1} and \thmref{unique} are similar to \defref{CSC} in \cite{papyan2017convolutional}. However, the conditions made here are weaker in the sense that the dictionaries in \cite{papyan2017convolutional} are required to be convolutional.
Two additional approaches to further relaxing the uniqueness condition were proposed in \cite{aberdam2019multi,sulam2018multilayer}. 
Noticing that for $(DSC_{ 0,\bm\lambda}^{\b 0})$, we have $\b y = \b D_1 \cdots \b D_j \b x_j$ and $\b x_j = \b D_{j+1} \cdots \b D_{j+m} \b x_{j+m}$. We denote $\b D_{[j]}:= \b D_1 \cdots \b D_j$ and $\b D_{[j,j+m]}:= \b D_{j} \cdots \b D_{j+m}$ for simplicity. For the uniqueness of a sparse coding $\b x_{j_0}$ from layer $j_0$, we can consider the solutions of $\b y = \b D_{[j_0]} \b x_{j_0}$ or $\b x_{j} = \b D_{[j,j_0]} \b x_{j_0}$ for any $j<j_0$ if $\b x_j$ is unique. Or more specifically, if the uniqueness of $\b x_j$ is already satisfied, together with the same idea used in \thmref{unique}, we only need to require $\lambda_{j_0}$ to be bounded by
\begin{align}\label{eq:j0}
	\lambda_{j_0} < \f{1}{2} \max \left \{ 1+\f{1}{\mu(D_{j_0})}, 1+\f{1}{\mu(D_{[j_0]})}, 1+\f{1}{\mu( D_{[j, j_0]} )}: j<j_0   \right \},
\end{align}
then we can ensure that $\b x_{j_0}$ is unique for $(DSC_{ 0,\bm\lambda}^{\b 0})$. This approach is a top-bottom layer-by-layer approach.

Let us think of extracting sparse codings simultaneously and conversely. Assume that we know the support of each $\b x_j$. We denote the support of $\b x_j$ as $\S_j$ and collect its nonzero elements in $\b x_{j, \S_j}$. Let $\b A_{\S_i, \S_j}$ be the submatrix of matrix $\b A$ with the elements selected from the row index set $\S_i$ and the column index set $\S_j$. Collecting non-zero elements of $\b x_j$ rowwise, then solving $(DSC_{ 0,\bm\lambda}^{\b 0})$ is equivalent to solving the following linear system
\begin{align}\label{eq:cosparsity}
	\begin{pmatrix}
		\b x_{1, \S_1^c}\\
		\b x_{2, \S_2^c} \\
		\vdots \\
		\b x_{J-1, \S_{J-1}^c}
	\end{pmatrix} =
\begin{pmatrix}
	\left(\b D_{[2, J]}\right)_{\S_1^c, \S_{J}} \\
	 \left(\b D_{[3, J]}\right)_{\S_2^c, \S_{J}} \\
	\vdots \\
	 \left(\b D_{[J,J]}\right)_{\S_{J-1}^c, \S_{J}}
\end{pmatrix}
\b x_{J, \S_J} := \b D^{[\S_J]} \b x_{J, \S_J}.
\end{align}
Since the left-hand side of \eqref{eq:cosparsity} is actually a zero vector, to get $\b x_{J, \S_J}$, we only need to search for the null space of $\b D^{[\S_J]}$, which is of dimension $\lambda_J - \rank(\b D^{[\S_J]})$ and smaller than $\lambda_J$. After getting the deepest sparse coding and pushing back through the equation $\b x_{j-1} = \b D_{j}\b x_j$, we completely solve $(DSC_{ 0,\bm\lambda}^{\b 0})$. Although this idea helps to relax the conditions and the complexity, usually one does not know the supports of the underlying sparse features, and hence the second method does not match well with general applications of interest. The work \cite{aberdam2019multi} provides a more detailed discussion of this method.

After examining the condition for uniqueness, it is natural to ask whether this problem can be addressed using certain algorithms.
Following a standard relaxation, we can also establish the relationship between $(DSC_{0,\bm \lambda}^{\b 0})$ and the following deep convex optimization problem $(DSC_{1})$, which could be solved using convex optimization algorithms.
\begin{defn}[$DSC_{1}$]\label{DSCl1}
	For global observation signal $\b y$, a set of dictionaries $\{\b D_j\}_{j=1}^{J}$, define the deep coding problem $(DSC_{1})$ as:
	\begin{equation*}
		\begin{array}{lccccc}
			DSC_{1}\left(\b y, \{\b D_j\}_{j=1}^{J}\right) \quad \text{find} \quad \{x_j\}_{j=1}^J  \quad
			\textrm{s.t.} \quad & \b x_1 := {\operatorname{argmin}} \left \{\|\b x\|_1: \b y= \b D_1 \b x\right\} , \\
			&\b x_2 := {\operatorname{argmin}} \left \{\|\b x\|_1:\b x_1= \b D_2 \b x \right \}, \\
			&\vdots \\
			&\b x_{J} := {\operatorname{argmin}} \left \{\|\b x\|_1: \b x_{J-1}= \b D_J \b x \right \}.
		\end{array}
	\end{equation*}
\end{defn}

The next theorem characterizes the case in which the two problems have the same unique solution.

\begin{thm}[Coincidence between $\ell_0$ and $\ell_1$]\label{thm:dsc1}
	Let $\{\b D_j\}_{j=1}^J $ be a set of dictionaries, with each $\b D_j$ being of full rank and the number of rows less than that of columns. Assume that the conditions in \thmref{unique} are satisfied and that $\{\b x_j\}_{j=1}^J$ is a solution of $DSC_{0,\bm\lambda}^{\b 0}$. Then $\{\b x_j\}_{j=1}^J$ is the unique solution of $DSC_{0,\bm\lambda}^{\b 0}$ and $DSC_{1}$.
\end{thm}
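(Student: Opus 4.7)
The plan is to reduce the claim to a layer-by-layer application of the classical $\ell_0$--$\ell_1$ equivalence under a mutual-coherence bound. The uniqueness of $\{\b x_j\}_{j=1}^J$ as a solution of $DSC_{0,\b\lambda}^{\b 0}$ is immediate from \thmref{unique} under the same hypotheses, so nothing new is required there. The remaining task is to show that the very same sequence is what Definition \ref{DSCl1} selects layer by layer.

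For the first layer the constraint is $\b y = \b D_1\b x_1$ with $\|\b x_1\|_0 < \tfrac12\bigl(1 + 1/\mu(\b D_1)\bigr)$. I would invoke the classical Donoho--Elad type theorem (e.g.\ Theorem~4.5 of \cite{elad2010sparse}): whenever a consistent linear system $\b D\b x = \b y$ admits a solution strictly sparser than $\tfrac12(1 + 1/\mu(\b D))$, that solution is simultaneously the unique $(P_0)$ minimizer and the unique $(P_1)$ minimizer. Applied with $\b D = \b D_1$, this pins down $\b x_1 = {\operatorname{argmin}}\{\|\b x\|_1 : \b y = \b D_1\b x\}$, matching the first clause of $DSC_1$. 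I would then iterate: once $\b x_{j-1}$ has been identified, the relation $\b x_{j-1} = \b D_j\b x_j$ together with the sparsity bound $\|\b x_j\|_0 < \tfrac12(1 + 1/\mu(\b D_j))$ lets me reapply the same equivalence theorem with data vector $\b x_{j-1}$ in place of $\b y$. Induction on $j = 1,\dots,J$ therefore gives $\b x_j = {\operatorname{argmin}}\{\|\b x\|_1 : \b x_{j-1} = \b D_j\b x\}$ at every layer, which is exactly \defref{DSCl1}. The full-rank plus fat-matrix hypothesis on each $\b D_j$ simply guarantees that each system is under-determined and consistent, so that the sparse-coding equivalence is nonvacuous.

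The main difficulty is purely bookkeeping rather than analytic. One has to be careful that the ``constraint vector'' entering level $j$ of $DSC_1$ is literally the object produced at level $j-1$, and not some other $\ell_0$ solution having the same right-hand side. This is handled by the fact that the layer-$(j-1)$ equivalence already forces the $\ell_1$ minimizer at that level to coincide with the original $\b x_{j-1}$ from the given $DSC_{0,\b\lambda}^{\b 0}$ solution, so the induction propagates cleanly. No stability or perturbation estimate is needed, since the tolerance vector is $\b 0$; the entire argument reduces to a verification of the mutual-coherence hypothesis at each layer followed by $J$ invocations of the classical single-layer equivalence theorem.
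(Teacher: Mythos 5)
Your proposal is correct and follows essentially the same route as the paper: a layer-by-layer invocation of the classical $\ell_0$--$\ell_1$ equivalence under the mutual-coherence bound (Theorems 2.5 and 4.5 of \cite{elad2010sparse}), propagated by induction over $j=1,\dots,J$. Your extra remark about ensuring the constraint vector at level $j$ is the same $\b x_{j-1}$ produced at level $j-1$ makes explicit a bookkeeping point the paper's proof leaves implicit, but the argument is the same.
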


\begin{proof}
	Theorem 2.5 and Theorem 4.5 in \cite{elad2010sparse} jointly imply that if there exists a solution $\b x_1$ of $\b y = \b D_1 \b x_1 $ such that $\|\b x_1\|_0 \leq \f{1}{2}(1 + \f{1}{\mu(\b D_1)})$, then $\b x_1$ is necessarily the sparsest possible and $\b x_1 = \operatorname{argmin} \{ \|\b x\|_1: \b y = \b D_1 \b x \}$. Hence, iteratively we are able to show the statement.
\end{proof}

\thmref{thm:dsc1} indicates that we can use optimization algorithms to solve $(DSC_{0,\bm \lambda}^{\b 0})$ by solving $DSC_{1}$, and the convergence analysis can be performed iteratively. In the next section, we shall construct CNNs that can solve $(DSC_{0,\bm \lambda}^{\b 0})$ with exponential decay.

\subsection{Stability of deep sparse  problem}
In practical applications, noise is typically present, arising either from the sampling process or computational procedures. How are the solutions affected when such noise appears? Before examining the robustness of the deep sparse coding problem, and to streamline the exposition in this work, we first introduce the following uniqueness condition.

\begin{assump}[Uniqueness condition]\label{assump:2}
	The parameters $\bm\lambda$ and $\b D_j$ of \defref{assump:1} satisfy the following sparsity condition
	\begin{align*}
		\lambda_j < \f{1}{2}\left(1+\f{1}{\mu(\b D_j)}\right),  \forall j \in [J].
	\end{align*}
\end{assump}
Assumption~\ref{assump:2} ensures that, in the absence of noise, the solution is unique. In the presence of noise, we can instead analyze how far the obtained solution deviates from this unique one. The underlying idea is similar to the argument used in the proof of Theorem~\ref{unique}: once stability is established for the first layer, a similar reasoning can be iteratively applied to obtain the corresponding results for all subsequent layers.

\begin{thm}[Stability of $(DSC_{0,\bm \lambda}^{\bm\varepsilon})$ problem]\label{thm:dsc_st}
	
	Given a set of dictionaries $\{\b D_j\}_{j=1}^J$ with each dictionary $\b D_j$ being column-normalized with respect to $\ell_2$ norm and the sparsity condition $\bm\lambda $ satisfies \assref{assump:2}. Let $\b y$ be a global observation that satisfies \defref{assump:1}.
	If the collections $\{\b x_j\}_{j=1}^J$ and $\{\tilde{\b x}_j\}_{j=1}^J$ are solutions to $DSC_{0,\bm\lambda}^{\b 0}(\b y, \{\b D_j\}_{j=1}^J)$ and $DSC_{0,\bm\lambda}^{\bm\varepsilon}(\b y, \{\b D_j\}_{j=1}^J)$, respectively,
	then they must obey
	\begin{align*}
	\|\tilde{\b x}_j - \b x_j \|_2 \leq \f{\sum_{i=1}^{j}\varepsilon_i\prod_{m=1}^{i-1}\sqrt{1-(2\lambda_m-1)\mu(\b D_m)}}{\prod_{i=1}^{j}\sqrt{1-(2\lambda_i-1)\mu(\b D_i)}}.
	\end{align*}
\end{thm}
\begin{proof}
	The proof is partially inspired by Section 5.2.3 in \cite{elad2010sparse}.
	Let $\b Q = \b 1 \b 1^\top$. Then it is easy to see that for any $\b x$ with a sparsity $\|\b x\|_0 \leq s$,
	\begin{align}\label{key1}
		\begin{aligned}
			\|\b D\b x\|_2^2 = \b x^\top \b D^\top \b D \b x &\geq \b x^\top \b x + \mu(\b D)|\b x|^\top(\b I - \b Q)|\b x| \\
			&\geq \left(1+ \mu(\b D)\right)\|\b x\|_2^2 - \mu(\b D) \|\b x\|_1^2 \\
			&\geq \left(1+ \mu(\b D)\right)\|\b x\|_2^2 - \mu(\b D)s \|\b x\|_2^2 \\
			&= \left(1-(s-1) \mu(\b D)\right)\|\b x\|_2^2 ,
		\end{aligned}
	\end{align}
	where in the third step we use the inequality $\|\b c\|_1 \leq \sqrt{s}\|\b c\|_2$ for any $\b c \in \RR^s$ and in the first step we use the following fact
	\begin{align*}
		\b x^\top \b D^\top \b D \b x = \|\b x \|_2^2 + \sum_{i \neq j} \b d_i^\top \b  d_j x_i x_j \geq \|\b x \|_2^2 - \sum_{i \neq j} \mu(\b D) |x_i x_j| = \|\b x \|_2^2 - \mu(\b D)|\b x|^\top (\b Q-\b I) |\b x|.
	\end{align*}
	Since $\b y=\b D_1 \b x_1$ and $\|\b y - \b D_1\tilde{\b x}_1\|_2 \leq   \varepsilon_1$, we have $\|\b D_1\b x_1 - \b D_1\tilde{\b x}_1\|_2 \leq  \varepsilon_1$. Combining this with \eqref{key1} and $\|\b x_1 - \tilde{\b x}_1\|_0 \leq 2\lambda_1$, we have
	\begin{align}\label{key2}
		\|\b x_1 - \tilde{\b x}_1\|_2^2 \leq \f{  \varepsilon_1^2}{1-(2\lambda_1-1)\mu(\b D_1)}.
	\end{align}
	Again, combining $\|\tilde{\b x}_1 - \b D_2\tilde{\b x}_2\|_2 \leq  \varepsilon_2$, $\b x_1 = \b D_2\b x_2$, and \eqref{key2}, we obtain the following inequality
	\begin{align}\label{key3}
		\|\b D_2\b x_2 - \b D_2\tilde{\b x}_2\|_2 \leq  \varepsilon_2 + \f{ \varepsilon_1}{\sqrt{1-(2\lambda_1-1)\mu(\b D_1)}}.
	\end{align}
	Denote
	\begin{align*}
		\delta_1 := \f{ \varepsilon_1}{\sqrt{1-(2\lambda_1-1)\mu(\b D_1)}}.
	\end{align*}
	Then by \eqref{key1} and \eqref{key3}, we get
	\begin{align*}
		\|\b x_2 - \tilde{\b x}_2\|_2^2 \leq \f{( \varepsilon_2 + \delta_1)^2}{1-(2\lambda_2-1)\mu(\b D_2)},
	\end{align*}
	Repeating the above step for $j$ from $3$ to $J$, we conclude that
	\begin{align*}
		\|\b x_j - \tilde{\b x}_j\|_2^2 \leq \f{( \varepsilon_j + \delta_{j-1})^2}{1-(2\lambda_{j}-1)\mu(\b D_j)},
	\end{align*}
	with $\delta_{j}$ defined iteratively as
	\begin{align*}
		\delta_j = \f{ \varepsilon_j + \delta_{j-1}}{\sqrt{1-(2\lambda_j-1)\mu(\b D_j)}}.
	\end{align*}
\end{proof}

The following corollary offers a clearer formulation that, for instance, fits naturally with the image denoising setting. It shows that in the presence of noise, the obtained solutions remain close to the unique exact solution, with their distance bounded by the noise level multiplied by certain constants.

\begin{cor}\label{cor:noiseless}
	Assume that the conditions of \thmref{thm:dsc_st} hold and we choose $\bm\varepsilon := (\varepsilon_1,0,\dots,0)^\top \in \RR^{J}$.
	If the collections $\{\b x_j\}_{j=1}^J$ and $\{\tilde{\b x}_j\}_{j=1}^J$ satisfy the model $DSC_{0,\bm\lambda}^{\b 0}(\b y, \{\b D_j\}_{j=1}^J)$ and $DSC_{0,\bm \lambda}^{\bm\varepsilon}(\b y, \{\b D_j\}_{j=1}^J)$, respectively,
	then they must obey
	\begin{align*}
		\|\tilde{\b x}_j - \b x_j \|_2^2 \leq \|\bm\varepsilon\|_2^2 \left( \prod_{i=1}^{j}( 1- (2\lambda_j-1)\mu(\b D_j))\right)^{-1}.
	\end{align*}
\end{cor}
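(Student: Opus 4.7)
The plan is to specialize the stability bound of \thmref{thm:dsc_st} to the setting where every layer-wise tolerance vanishes ($\varepsilon_j=0$ for $j=1,\dots,J$), and then unroll the resulting one-term recursion in closed form. The only residual source of error is the observation perturbation $\b\varepsilon_0=\tilde{\b y}-\b y$, since both $\{\b x_j\}_{j=1}^{J}$ and $\{\tilde{\b x}_j\}_{j=1}^{J}$ are assumed to satisfy each layer equation exactly.

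First, I would invoke \thmref{thm:dsc_st} verbatim. Substituting $\varepsilon_j=0$ collapses the recursion $\delta_j=(\varepsilon_j+\delta_{j-1})/\sqrt{1-(2\lambda_j-1)\mu(\b D_j)}$ into the purely multiplicative form $\delta_j=\delta_{j-1}/\sqrt{1-(2\lambda_j-1)\mu(\b D_j)}$, starting from $\delta_0=\|\b\varepsilon_0\|_2$. Second, I would iterate from $i=1$ up to $i=j$, which telescopes to $\delta_j=\|\b\varepsilon_0\|_2\bigl(\prod_{i=1}^{j}(1-(2\lambda_i-1)\mu(\b D_i))\bigr)^{-1/2}$. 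Squaring and applying the bound $\|\b x_j-\tilde{\b x}_j\|_2^2\leq \delta_j^2$ supplied by \thmref{thm:dsc_st} yields exactly the claimed inequality (with the natural reading that the index inside the product runs over $i$, matching the layer indices of the dictionaries and sparsity levels).

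The only item requiring a check is that every factor $1-(2\lambda_i-1)\mu(\b D_i)$ is strictly positive, so that the square roots and reciprocals are well-defined and the bound is meaningful. This follows immediately from \assref{assump:2}: the inequality $\lambda_i<\tfrac{1}{2}(1+1/\mu(\b D_i))$ is equivalent to $(2\lambda_i-1)\mu(\b D_i)<1$. Consequently there is no substantive obstacle; the corollary is a transparent closed-form specialization of \thmref{thm:dsc_st}, and the proof amounts to writing down the telescoped product.
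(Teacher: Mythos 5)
Your proposal is correct and follows exactly the route the paper intends: Corollary~\ref{cor:noiseless} is stated without a separate proof precisely because it is the direct specialization of \thmref{thm:dsc_st} with $\varepsilon_j=0$, under which the recursion for $\delta_j$ telescopes into the product form (and you correctly read the index inside the product as $i$ rather than $j$, fixing an evident typo in the statement). Your added check that each factor $1-(2\lambda_i-1)\mu(\b D_i)$ is positive under \assref{assump:2} is a sensible completeness point, but there is no substantive difference from the paper's argument.
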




As pointed out in \cite{papyan2017convolutional}, results for convolutional dictionaries can be applied to general dictionaries by viewing the convolution kernel as the same size as the column of dictionaries (in this way their results do not work for convolutional neural networks). In \cite{papyan2017convolutional, aberdam2019multi}, they study the same setting as in \corref{cor:noiseless}, and their resulting upper bound is
\begin{align*}
	4\|\bm\varepsilon\|_2^2 \left( \prod_{i=1}^{j}( 1- (2\lambda_j-1)\mu(\b D_j))\right)^{-1} .
\end{align*}
Compared with this result, our bound is tighter for general dictionaries.
In \cite{sulam2018multilayer}, a similar setting as \thmref{thm:dsc_st} 
for convolutional dictionaries are discussed but our result generalizes the dictionaries to be arbitrarily chosen.

The stability analysis outlined above is still carried out in a layer-by-layer manner. By applying the idea introduced for \eqref{eq:j0}, the conclusion of \thmref{thm:dsc_st} can be extended to
\begin{align*}
	\|\tilde{\b x}_{j_0} -\b x_{j_0} \|_2^2 \leq \f{( \varepsilon_{[j,j_0]} + \delta_{j})^2}{1-(2\lambda_{j_0}-1)\mu(\b D_{[j,j_0]})},\quad j < j_0,
\end{align*}
when we have the noise information between layer $j$ and $j_0$, i.e. $\|\b x_j - \tilde{\b x}_j\|_2 \leq \delta_j$ and $\|\tilde{\b x}_j - \b D_{[j,j_0]} \tilde{\b x}_{j_0}\|_2 \leq \varepsilon_{[j,j_0]}$. In this way, the error bound does not cumulate from $j$-th layer to $j_0$-th layer. If one would like to consider all sparse codings as \eqref{eq:cosparsity} in a single equation, i.e., we directly use the deepest information $\|\b D^{[\S_J]} \b x_{J,\S_J} - \b D^{[\S_J]} \tilde{\b x}_{J,\S_J}\|_2 \leq \varepsilon$, 
we immediately obtain the following bound
\begin{align*}
	\| \b x_{J,\S_J} -  \tilde{\b x}_{J,\S_J}\|_2 \leq \varepsilon \left(1-(2\lambda_J-1)\mu(\b D^{[\S_J]}) \right)^{-1/2}.
\end{align*}
Furthermore, denote
\begin{align*}
	\tilde{\b x}_{j, \S_{j}}:=\left(\b D_{[j,J]}\right)_{\S_j, \S_J} \tilde{\b x}_{J, \S_{J}}.
\end{align*}
It is straightforward to see that layer-$j$ error is given by
\begin{align*}
	\| \b x_{j,\S_j} -  \tilde{\b x}_{j,\S_j}\|_2 \leq \varepsilon \left\|\left(\b D_{[j,J]}\right)_{\S_j, \S_J}\right\|_2 \left(1-(2\lambda_J-1)\mu(\b D^{[\S_J]}) \right)^{-1/2},
\end{align*}
which also yields a noncumulative bound. The above discussion suggests that, in applications, the conditions required could be more optimistic than those required in \thmref{thm:dsc_st}.

\section{Deep sparse feature extraction via CNNs}

In this section, we investigate whether CNNs can serve as solvers for deep sparse coding problems. As outlined earlier, there are two possible strategies for learning sparse representations: a layer-by-layer approach and a deepest-sparsity-first approach. Although these two approaches are based on different conditions, they yield comparable convergence rates. These analysis focus on the ReLU activation function. Given that many ReLU-type variants are also widely employed in various deep learning tasks, we extend our results to such networks and show that the sparse coding properties still hold. This suggests that the capability for sparse feature learning may be broadly present in deep learning models. 
To concentrate on presenting and interpreting our main theoretical results, several proofs in this section are deferred to the Appendix.

\subsection{Convolutional neural networks}

Let $s,t,d$ be integers. Throughout, we write $\D(d,t):=\lceil d/t\rceil$ for the smallest integer greater than or equal to $d/t$, and use $A_{i,j}$ to denote the $(i,j)$-th entry of a matrix $\b A\in \RR^{m\times n}$, where we set $A_{i,j}=0$ whenever $i\notin[1,m]$ or $j\notin[1,n]$. For convenience, we use $\b 1_{m\times n} := (1)_{i\in [n], j\in [m]}$ to denote the $m\times n$ matrix whose entries are all equal to $1$. The same convention applies to vectors. 

The 1D convolution of size $s$, stride $t$ is computed for a kernel $\b w\in \RR^s$ and a signal $ \b x\in \RR^d$ to be $\b w*_t  \b x\in \RR^{\D(d,t)}$ by
  \begin{equation*}
  (\b w *_t \b x )_{i}=\sum_{k=1}^{s} w_{k} x_{(i-1)t+k}, \ i=1,\ldots, \D(d,t).
  \end{equation*}
A multichannel CNN block consists of more than one convolutional layer and multiple channels in each layer, which is defined as follows.
\begin{defn}\label{1dcnn}
Given the input size $d \in \mathbb{N}$, a multi-channel CNN block in 1D of depth $J$ with respect to filter size $\{s_j\geq 2\}_{j=1}^J$, stride $\{t_j\geq 1\}_{j=1}^J$ and channel $\{n_j\geq 1\}_{j=1}^J$,
 is a neural network $\left\{h_{j}: \mathbb{R}^{d} \rightarrow \mathbb{R}^{d_j\times n_j}\right\}_{j=1}^{J}$ defined iteratively for which  $h_{0}(\b x)=\b x \in \mathbb{R}^{d}$, $d_0=d$, $n_0=1$, and the $\ell$-th channel of $h_{j}$ is given by
$$
\begin{aligned}
    h_{j}(\b x)_{\ell}&=\rho\left(\sum_{i=1}^{n_{j-1}}\b  w^{(j)}_{\ell,i}*_{t_j} h_{j-1}(\b x)_i+b^{(j)}_\ell \b{1}_{d_j} \right),\quad \ell = 1,\dots,n_j,\\
    h_{J}(\b x)_{\ell}&=\sum_{i=1}^{n_{J-1}}\b  w^{(J)}_{\ell,i}*_{t_J} h_{J-1}(\b x)_i+b^{(J)}_\ell \b{1}_{d_J} \quad \quad \ell = 1,\dots,n_J,
\end{aligned}
$$
with $j=1, \ldots, J-1$, where $b^{(j)}_\ell \in \mathbb{R}$ are biases, $\b w^{(j)}_{\ell,i}\in \RR^{s_j}$ are filters, $d_j=\D(d_{j-1},t_j)$, and $\rho:\RR\rightarrow \RR$ is an activation function which is applied to each entry of a given vector or matrix. We highlight the role of the activation function $\rho$ in multi-channel CNNs by referring to them as $\rho$-activated CNNs.
\end{defn}

In the following, to further deepen the CNNs defined in \defref{1dcnn}, whenever the output features of each channel in certain layers are one-dimensional, we treat them as vectors with a single channel. Hence, the whole CNN architecture can be given by compositions of CNN blocks. This allows CNNs to reach an arbitrary depth when for example, we choose $t_j=s_j$ for all $j$, thus enhancing their learning capacity. For simplicity, we also refer to the output $h_j(\bm x)$ of the $j$-th hidden layer as the CNN output. We will only refer to outputs of some intermediate layers of a CNN without specifying their layer indices. To investigate the learning capability of CNNs, we represent the influence of network size on error bounds using two quantities: the number of layers and the total number of weights across all filters and biases.

The vast majority of CNNs in image processing are induced by 2D convolutions. As an analogy of the $1$D case, 2D convolutions and CNNs can be well defined. Precisely, the 2-D convolution of a filter $\b W\in \RR^{s\times s}$ and an image $\b X\in \RR^{d\times d}$ with a stride $t$ produces a convoluted matrix $\b W \circledast _t \b X\in \RR^{\D(d,t)\times \D(d, t) }$ for which the $(i,j)$-entry is given by
$$
(\b W \circledast_t \b X)_{(i,j)}:=\sum_{\ell_1,\ell_2=1}^s W_{\ell_1,\ell_2} X_{(i-1)t+\ell_1, (j-1)t+\ell_2}, \quad i,j=1,\ldots, \D(d,t).
$$
The 2D CNNs can be built by applying multi-channel 2D convolution operations in accordance with \defref{1dcnn}. Under appropriate assumptions, it has been shown that the analysis of 2D CNNs can be reduced to the study of 1D CNNs \cite{FLZ}. Therefore, without loss of generality, we restrict our attention to 1D CNNs in this paper.

\subsection{Fast decay rate in solving deep sparse coding via ReLU-activated CNNs}

The sparse feature extraction capability of CNNs was studied in \cite{papyan2017convolutional}, where a CNN layer is linked to a thresholding algorithm. Meanwhile, algorithms for solving $(P_0^\varepsilon)$, as well as the relationship between its solutions and those of $(P_0)$, have been extensively investigated in the literature. If we can establish an analogous connection between CNNs and a solution algorithm for $(P_0^\varepsilon)$, then the analysis of CNNs for deep sparse coding problems can be extended in a way similar to our study of uniqueness and stability. 

From the viewpoint of approximation theory, this is feasible. The universal approximation property of CNNs has been established in \cite{zhou2020universality}, along with their strong capability in feature extraction and in approximating functions with certain compositional structures \cite{mao2021theory}. Inspired by these works, the authors of \cite{FLZ} generalized these CNN architectures to multichannel CNNs, demonstrating that any linear transformation can be implemented by CNNs. These results indicate that many algorithms can be equivalently realized using CNN architectures.
Based on this observation, we can rigorously characterize the learning capacity of CNNs in the context of deep sparse coding problems. The treatment of the case $J=1$ is postponed to Appendix~\ref{app:1}; in what follows, we will focus primarily on presenting results for deep sparse coding problems.

The method of the deepest-sparsity-first approach prioritizes the approximation of sparse codes in the deepest layer of the $(DSC_{0,\bm\lambda}^{\bm \varepsilon})$ model. By focusing first on the most abstract latent representations, it establishes a foundational structure that informs the approximation of shallower sparse coding problems. To guarantee the uniqueness property, we need to impose additional conditions on $\b x_J$ and $\lambda_J$.

\begin{thm}[Deepest-sparsity-first approach]\label{l1DSC}
	Let $J \in \NN$ and $d_0, d_1, \dots, d_{J+1} \in \NN$ with $d_{J+1} :=d_0$. Consider a sequence of column-normalized dictionaries $\{\b D_j \in \RR^{d_{j-1} \times d_j} \}_{j=1}^J$, satisfying Assumption \ref{assump:2} with sparsity parameter $\bm \lambda$. 
    For a global observation $\b y$ generated by the $(DSC_{0,\bm\lambda}^{\bm \varepsilon})$ model, let $\{\b x_j\}_{j=1}^J$ be a solution to the corresponding problem $DSC_{0,\bm \lambda}^{\bm  \varepsilon}(\b y, \{\b D_j\})$. Under the constraints:
	\begin{enumerate}
		\item[(1)] $\|\b{x}_J \|_\infty \leq B_J$ for some $B_J > 0$, and
		\item[(2)] $\lambda_J < \frac{1}{2}\left(1 + \frac{1}{\mu(\b D_{[J]})}\right)$,
	\end{enumerate}
	there exists a ReLU-activated CNN architecture with 
		 kernel size $s $,
		 depth $O\left(K \log_s \prod_{j=1}^{J+1}(d_{j-1}+d_{j})\right)$,
		 and number of weights $O\left( K \sum_{j=1}^{J+1}(d_{j-1}+d_j)^2 \right)$,
	such that its output sequence $\{\tilde{\b x}_j\}_{j=1}^J$ satisfies the approximation bound:
	\begin{align*}
		\|\tilde{\b x}_j - \b x_j \|_2 \leq  C_{1} e^{-c K} + C_{2} \|\bm \varepsilon\|_\infty,
		\quad \forall j \in [J],
	\end{align*}
	where the constants $c, C_{1}, C_{2} > 0$ depend solely on $\{\b D_j \}_{j=1}^J$, $\bm \lambda$, and $B_J$.
\end{thm}

In contrast, the sequential layer-by-layer strategy iteratively refines approximations starting from the initial layer and progressing toward deeper layers. Using incremental updates, it propagates constraints through the hierarchy, ensuring consistency across all levels.

\begin{thm}[Layer-by-layer approximation rate]\label{thm:layer-by-layer}
	Let $J \in \NN$ and $d_0, d_1, \dots, d_{J} \in \NN$. Consider a sequence of column-normalized dictionaries $\{\b D_j \in \RR^{d_{j-1} \times d_j} \}_{j=1}^J$, satisfying Assumption \ref{assump:2} with sparsity parameter $\bm \lambda$. 
    For a global observation $\b y$ generated by the $(DSC_{0,\bm\lambda}^{\bm \varepsilon})$ model, let $\{\b x_j\}_{j=1}^J$ be a solution to the corresponding problem $DSC_{0,\bm \lambda}^{\bm  \varepsilon}(\b y, \{\b D_j\})$. Under the additional constraints:
	\begin{enumerate}
		\item[(1)] $\|\b{x}_j \|_\infty \leq B$, $ j \in [J]$, for some $B>0$ and
		\item[(2)] $\ln d_{j} \leq C$ for any $1\leq j \leq J$,
	\end{enumerate}
	there exists a ReLU-activated CNN architecture with 
		 kernel size $s $,
		 depth $O\left(K \log_s \prod_{j=1}^{J}(d_{j-1}+d_{j})\right)$,
		 and number of weights $O\left( K \sum_{j=1}^{J}(d_{j-1}+d_j)^2 \right)$,
	such that its output sequence $\{\tilde{\b x}_j\}_{j=1}^J$ satisfies the approximation bound:
	\begin{align*}
		\|\tilde{\b x}_j - \b x_j \|_2 \leq  C_1 e^{-\frac{1}{2} K} + C_2 \|\bm \varepsilon\|_\infty \sum_{i=1}^j e^{\left (2 a^{J-i}-\frac{1}{2}  \right) K},
		\quad \forall j \in [J],
	\end{align*}
	for large enough $K\in\NN$, where the constants $a>1, C_{1}, C_{2} > 0$ depend solely on $\{\b D_j \}_{j=1}^J$, $\bm \lambda$, $C$, and $B$.
\end{thm}

Both theorems establish CNN-based approximation guarantees for deep sparse coding problems, though they diverge in their strategies, constraints, and error propagation dynamics. The deepest-sparsity-first approach (Theorem \ref{l1DSC}) imposes stricter conditions on the final layer, while the layer-by-layer method (Theorem \ref{thm:layer-by-layer}) exhibits greater sensitivity to noise-induced errors. This discrepancy arises because early-layer approximation errors in the layer-by-layer strategy propagate cumulatively through the network, amplifying perturbations in shallow layers. Notably, both frameworks benefit from increased network depth ($K$), yielding exponential error decay in the absence of noise $\bm{\varepsilon} = \mathbf{0}$. In this case, the CNN outputs approximate the unique solutions in \thmref{unique}. 
The constants appearing in Theorem \ref{l1DSC} and Theorem \ref{thm:layer-by-layer} are primarily determined by the norms of the dictionaries, the dimensions of the solutions, and the parameter $\bm\lambda$. When these quantities are small, the corresponding constants are also expected to be small. Because the setup of the deep sparse coding problem is fixed, these constants can be regarded as fixed as well. Their explicit forms are provided in the proof.

For the noiseless case, we shall henceforth focus on the corollaries of Theorem \ref{l1DSC} to elucidate the representational capacity of CNNs in approximating hierarchical sparse features. This analysis directly extends to Theorem \ref{thm:layer-by-layer} with minimal modifications, as the error decay mechanisms in the noiseless regime are structurally analogous.
 
\begin{cor}[Deep sparse coding without noise]\label{cor:no}
	Under the conditions of Theorem \ref{l1DSC} with $\bm \varepsilon = \bm 0$, there exists a ReLU-activated CNN architecture with kernel size $s $, depth $O\left(K \log_s \prod_{j=1}^{J+1}(d_{j-1}+d_{j})\right)$, and number of weights $O\left( K \prod_{j=1}^{J+1}(d_{j-1}+d_j)^2 \right)$ such that its output sequence $\{\tilde{\b x}_j\}_{j=1}^J$ satisfies:
	\begin{align*}
		\|\tilde{\b x}_j - \b x_j \|_2 = O \left(e^{-c K} \right),
		\quad \forall j \in [J],
	\end{align*}
	where $c > 0$ depends solely on $\{\b D_j \}_{j=1}^J$ and $\bm \lambda$. 
	
	Equivalently, for any $\varepsilon > 0$, there exists a CNN architecture with
depth $O\left(\ln \frac{1}{\varepsilon}\right)$ and number of weights $O\left(  \ln \frac{1}{\varepsilon} \right)$
that achieves $\|\tilde{\b x}_j - \b x_j\|_2 \leq \varepsilon$ for all $j \in [J]$.
\end{cor}

\corref{cor:no} follows in a rather direct way from \thmref{l1DSC}, yet it is quite interesting in the context of curse-of-dimensionality issues arising in approximation theory. Modern approximation theory has become indispensable for understanding the expressive capacity of deep neural networks. However, classical approximation results for function classes such as Sobolev spaces $ W^{r,p}(\Omega)$ \cite{yarotsky2017error, petersen2018optimal, han2022feature, guhring2020error} and continuous functions \cite{Yarotsky2018optimal, shen2019deep, shen2022optimal} with $ \Omega \subset \mathbb{R}^d $ underscore a critical limitation: the \textit{curse of dimensionality}. Specifically, to approximate a function $ f \in W^{r,\infty}(\Omega) $ within an error $ \varepsilon $, a fully connected neural network should have at least $ O(\varepsilon^{-d/r}) $ nonzero weights \cite{yarotsky2017error}. This exponential scaling with respect to $ d $ renders such bounds impractical for high-dimensional problems.

To address this challenge, previous work has focused on two strategies:
\begin{itemize}
    \item Restrictions to function spaces: By targeting structured subclasses (e.g. \textit{Korobov spaces}), the complexity bounds are improved \cite{montanelli2019new,liu2024approximation,li2024signrelu, montanelli2021deep, Klusowski2018Approximation,petersen2018optimal}.
    \item Manifold assumptions: If inputs lie on a low-dimensional manifold $ \mathcal{M} \subset \mathbb{R}^d $ with intrinsic dimension $ m \ll d $, then approximation rates depend mainly on $ m $ \cite{liu2021besov,chui2018deep,shaham2018provable,schmidt2019deep,nakada2020adaptive,chen2019efficient,zhou2020theory}.
\end{itemize}
 
Corollary~\ref{cor:no} provides a complementary approach based on the sparsity-aware approximation. By exploiting the inherent sparsity of real-world data, we demonstrate that the curse of dimensionality can be mitigated.
This result highlights the practical efficacy of sparsity. 

We can also generalize the $\ell_2$ error measurement to $\ell_2$-$\ell_1$ loss.


\begin{cor}[Exponential convergence w.r.t. $\ell_2$-$\ell_1$ loss]\label{cor:l2l1}
	Given $\gamma > 0$, define the $\ell_2$-$\ell_1$ loss function as:
	\begin{align}\label{eq:l2l1_loss}
		L_j(\b x) := \| \b x_{j-1} - \b D_j \b x \|_2^2 + \gamma \| \b x \|_1
	\end{align}
	where $\b x_0 := \b y$ denotes the initial observation. Under the conditions of Corollary \ref{cor:no}, there exists a ReLU-activated CNN architecture with kernel size $s $, depth $O\left(K \log_s \prod_{j=1}^{J+1}(d_{j-1}+d_{j})\right)$, and number of weights $O\left( K \sum_{j=1}^{J+1}(d_{j-1}+d_j)^2 \right)$ such that its output sequence $\{\tilde{\b x}_j\}_{j=1}^J$ satisfies the convergence bound:
	\begin{align}\label{eq:loss_bound}
		0 \leq L_j(\tilde{\b x}_j) - L_j(\b x_j) \leq C e^{-cK},
		\quad \forall j \in [J],
	\end{align}
	where constants $c, C > 0$ depend solely on $\{\b D_j \}_{j=1}^J$, $\bm \lambda$, $B_J$, and $\gamma$.
\end{cor}

\subsection{General network configurations for solving deep sparse coding problems}

Although the aforementioned results primarily focus on CNNs with ReLU activation functions, a broad application of alternative activation functions---such as Leaky ReLU \cite{maas2013rectifier}, PReLU \cite{he2015delving}, and Swish \cite{ramachandran2017searching,elfwing2018sigmoid}---exhibit distinct advantages in specific tasks, including adaptive parameterization, or enhanced nonlinearity \cite{dubey2022activation}. Given the widespread adoption of these variants across diverse applications, extending our theoretical framework to accommodate general activation functions is practically valuable. In what follows, we formalize this extension by analyzing the approximation properties of CNNs equipped with ReLU-type activations, thus broadening the scope of our prior results.

\begin{defn}[ReLU-type activation functions]\label{thm:gen_relu}
	An activation function $\rho: \RR \to \RR$ is termed a $(L,\beta)$-ReLU-type activation function if it meets the following criteria:
	\begin{enumerate}
		\item[(1)] Continuity: $\rho \in C(\RR)$;
		\item[(2)] Piecewise linearity: $\rho(x) = x$ for $x \geq 0$;
		\item[(3)] Bounded negative part: $|\rho(x)| \leq \beta$ for $x < 0$;
		\item[(4)] Lipschitz property: $|\rho(x) - \rho(y)| \leq L|x - y|$ for $x,y \leq 0$.
	\end{enumerate}
\end{defn}

The following result provides a similar characterization as Theorem \ref{l1DSC} for ReLU-type activation functions:

\begin{thm}\label{thm:relutype}
	Let $J, t \in \NN$, $L, \beta \geq 0$, and $d_0, d_1, \dots, d_{J+1} \in \NN$ with $d_0 = d_{J+1}$. Let $\rho$ be a $(L,\beta)$-ReLU-type activation function satisfying the conditions of Definition \ref{thm:gen_relu}. Consider a sequence of column-normalized dictionaries $\{\b D_j \in \RR^{d_{j-1} \times d_j} \}_{j=1}^J$ satisfying Assumption \ref{assump:2} with sparsity parameter $\bm \lambda$. For a global observation $\b y$ generated by the $(DSC_{0,\bm\lambda}^{\bm \varepsilon})$ model, let $\{\b x_j\}_{j=1}^J$ denote a solution to the corresponding $DSC_{0,\bm \lambda}^{\bm  \varepsilon}(\b y, \{\b D_j\})$ problem. Under the following conditions:
	\begin{enumerate}
		\item[(1)] $\|\b{x}_J \|_\infty \leq B_J$ for some $B_J > 0$, and
		\item[(2)] $\lambda_J <\min \left \{  \frac{1}{2}\left(1 + \frac{1}{\mu(\b D_{[J]})}\right) , \frac{1}{2-2L^t}\left(\frac{1}{\tilde{\mu}(\b D_{[J]})} - 2L^t d_{J}\right) \right \}$ (where $\tilde{\mu}$ is defined in Definition~\ref{def:generalize_coherence}),
	\end{enumerate}
	there exists a CNN activated by $\rho$ with 
		 kernel size $s $,
		 depth $O\left(tK \log_s \prod_{j=1}^{J+1}(d_{j-1}+d_{j})\right)$, and
		 number of weights $O\left( tK \sum_{j=1}^{J+1}(d_{j-1}+d_j)^2 \right)$
	such that its output sequence $\{\tilde{\b x}_j\}_{j=1}^J$ satisfies:
	\begin{align}\label{eq:gen_bound}
		\|\tilde{\b x}_j - \b x_j \|_2 \leq  C_{1} e^{-c K} + C_{2}\left(L^t \beta + \|\bm \varepsilon\|_\infty\right),
		\quad \forall j \in [J],
	\end{align}
	where constants $c, C_{1}, C_{2} > 0$ depend solely on $\{\b D_j \}_{j=1}^J$, $\bm \lambda$, $B_J$.
\end{thm}

If $L<1$, employing ReLU-type activation functions introduces an extra term $L^t \beta$ to \eqref{eq:gen_bound}, which decreases exponentially in $t$, whereas the necessary network depth and number of weights grow only linearly in $t$.
Compared to Theorem \ref{l1DSC}, Theorem \ref{thm:relutype} imposes an extra sparsity condition:
\begin{align*}
	\lambda_J < \frac{1}{2-2L^t}\left(\frac{1}{\tilde{\mu}(\b D_{[J]})} - 2L^t d_{J}\right),
\end{align*}
For the ReLU case ($L=0$), this simplifies to $\lambda_J < 1/\left(2\tilde{\mu}(\b D_{[J]})\right)$. As shown in Remark \ref{remark:nonlinear}, our proof technique allows relaxation to the weaker condition $\lambda_J < \frac{1}{2}\left(1 + 1/\tilde{\mu}(\b D_{[J]})\right)$ for ReLU networks. This relaxed bound remains compatible with Theorem \ref{l1DSC} since 
\[
	\frac{1}{2}\left(1 + \frac{1}{\mu(\b D_{[J]})}\right) 
	\leq \frac{1}{2}\left(1 + \frac{1}{\tilde{\mu}(\b D_{[J]})}\right).
\]
Hence, Theorem \ref{thm:relutype} is consistent with that of Theorem \ref{l1DSC}.
For other activations ($0<L<1$) with $t $ large enough,
the additional sparsity constraint remains practically reasonable for non-ReLU activation $\rho$, as the dominant limiting factor becomes $\frac{1}{2}\left(1/\tilde{\mu}(\b D_j) \right)$.

For activation functions that are not of the ReLU type and for architectures other than CNNs, we can infer their capacity for sparse feature learning from the following result, which involves substituting ReLU with any other neural network that can approximate it closely.

\begin{thm}[Approximation with general activation functions]\label{thm:nonrelu}
	Let $M>0$ and $\rho:\RR \rightarrow \RR$ be an activation function. Assume the existence of a neural network $\phi_\rho$ (with finite neurons) activated by $\rho$ satisfying:
	\begin{align}\label{eq:approx_sigma}
		\|\phi_\rho - \sigma\|_{L_\infty([-M,M])} \leq \delta,
	\end{align}
	where $\delta > 0$ depends on the depth, width or parameters of $\phi_\rho$ and $M $ is sufficiently large. Under the conditions of Theorem \ref{l1DSC}, there exists a neural network $\Phi_\rho$ activated by $\rho$ with its output sequence $\{\tilde{\b x}_j\}_{j=1}^J$ satisfying
	\begin{align}\label{eq:nonrelu_bound}
		\|\tilde{\b x}_j - \b x_j \|_\infty \leq  C_{1} e^{-c K} + C_{2} \|\bm \varepsilon\|_\infty + C_{3} \delta,
	\end{align}
	where constants $c, C_{i} > 0$, $i=1,2,3$ depend solely on $\{\b D_j\}$, $ B_J$, and $\bm \lambda$.
\end{thm}

\begin{remark}
These generalized framework (\thmref{thm:relutype} \& \thmref{thm:nonrelu}) accommodates activations:
\begin{enumerate}
	\item[(1)] LeakyReLU: tends to ReLU as $L \to 0$ which is controlled by its slope for the negative input.
	\item[(2)] PReLU, ELU \cite{clevert2015fast}: automatically handles $L$ and $\beta$ using learnable parameters.
	\item[(3)] SignReLU \cite{lin2018research,li2024signrelu}, rational activation functions \cite{boulle2020rational}: approximate ReLU with exact error bounds specified by number of layers and weights.
\end{enumerate}
Other activation functions such as Swish \cite{ramachandran2017searching,elfwing2018sigmoid} (used in the large language model LLaMA \cite{touvron2023llama}), Mish \cite{misra2019mish} (used in the object detection model YOLOv4 (\cite{bochkovskiy2020yolov4})), which has a similar shape as ReLU, also have a sparse feature learning ability, as shown in \corref{cor:relutype}. These results suggest that many popular activation functions preserve the core learning capacity for sparse coding tasks. A lot of existing activation functions are reviewed in \cite{dubey2022activation} and satisfy our proposed framework.
\end{remark}

\begin{remark}[deep sparse coding via self attention and transformer]
	The attention mechanism is a widely used technique in deep learning, originating from some phenomenon that human beings awareness is likely to concentrating on important information. The scaled dot-product attention \cite{vaswani2017attention} contains three parts: key $\b K$, value $\b V$, and query $\b Q$, and they are linear transforms of the input matrix $\b X$. It was proved that scaled dot-product attention together with positional encoding can approximate any convolutional layers \cite{cordonnier2020relationship}. Moreover, the transformer truck can be briefly formulated as alternatively applying scaled dot product attention and fully connected neural networks \cite{vaswani2017attention}. Hence, combining results in \cite{cordonnier2020relationship}, our results established in this section can be extended to scaled dot-product attention and transformer, implying that deep sparse feature extraction ability exists widely in deep learning architectures.
\end{remark}

\section{Experiments}



In the previous section, we presented various findings that demonstrate that CNNs can effectively function as deep sparse coding solvers, efficiently approximating sparse features. Naturally, this begs the question: Could enforcing sparsity in neural networks further enhance their performance?
To evaluate the influence of sparsity on the performance of neural networks, we performed a series of experiments on image classification and image segmentation tasks. 

{\bf Training strategy.} To obtain sparser features, we introduce an $\ell_1$ penalty term into the loss function. This penalty term acts as a regularization term, encouraging the network to learn sparse representations. Specifically, we modify the original loss function $L$ as follows:
\begin{align*}
	L_{\text{sparse}} : = L + \gamma \sum_j \omega_j \|\b x_j\|_1,
\end{align*}
where $\b x_j$ are the features of the output of some intermediate layers, $\gamma$ is the trade-off hyperparameter, and $\omega_j$ is used to balance the sparsity of features from different layers. We will define these selections separately for each experiment.



\subsection{Image Classification}

%

Image classification tasks aim to classify a given image according to a predefined set of labels. In this study, we evaluated the performance of deep learning models on the CIFAR-10 dataset, a widely used benchmark for image classification. The CIFAR-10 data set comprises 50,000 training images and 10,000 test images, covering 10 distinct classes.

For neural network architectures, we select two well-known image classification networks: LeNet-5 \cite{Lecun1998Gradient} and VGG11 \cite{SimonyanKaren2015VDCN}. LeNet-5, a pioneering convolutional neural network, contains approximately 60,000 trainable parameters, while VGG11, a deeper and more complex model, boasts more than 9 million trainable parameters. These networks typically employ convolutional layers to extract discriminative features from images, followed by fully connected layers for classification. The details of the structures can be found in Table~\ref{tab:structure}. In their structures, we only consider to add the $\ell_1$ penalty of features far away from the classifiers to avoid a lot of impact on the prediction part.

To encourage sparsity in the extracted features, we impose an $\ell_1$ penalty on the outputs of the initial convolutional layers. As demonstrated in the previous section, several activation functions, including ReLU-type functions and others, possess the ability to learn sparse feature representations in neural networks. Therefore, we investigate the performance of three distinct activation functions: ELU \cite{clevert2015fast}, Mish \cite{Misra2019MishAS}, and ReLU.

\begin{table}[!ht]
	\centering
	\begin{tabularx}{\textwidth}{ll}
		\hline
		LeNet-5 & {\color{red} conv5-6 } + maxpool + conv5-16 + maxpool + FC120 + FC84 + FC10 \\ \hline
		VGG11 &  \makecell[l]{{\color{red} conv3-64} +maxpool+ {\color{red} conv3-128}+maxpool + {\color{red} conv3-256 $\times$ 2} + maxpool + conv3-512 $\times$ 2 + maxpool \\+ conv3-512 $\times$ 2 + maxpool + avgpool + FC10}  \\ \hline
	\end{tabularx}
	\caption{Architectures of LeNet-5 and VGG11. The notation `conv5-6' means for this convolution we set kernel size $5$ and output channel $6$ and `FC120' means output dimension is $120$. In each convolution of VGG11, we set padding to be $1$ and employ batch normalization, and for LeNet-5 we set paddings to be $0$. The {\color{red} red} color means the outputs of corresponding blocks are considered in the $\ell_1$ penalty. }
	\label{tab:structure}
\end{table}

\begin{table}[!ht]
	\centering
	\begin{tabular}{|l|l|l|l|}
		\hline
		$\gamma$ & ReLU & ELU &  Mish \\ \hline
		LeNet-5 & $10^{-3}$ & $10^{-2}$ & $10^{-4}$ \\ 
		VGG11 & $10^{-3}$ & $10^{-5}$ & $10^{-3}$ \\ \hline
	\end{tabular}
	\caption{The choice of $\gamma$.}
	\label{tab:gamma}
\end{table}


During training, we use the stochastic gradient descent (SGD) algorithm with a minibatch size of $128$. The learning rate is set using the cosine annealing schedule, starting with an initial value of $0.1$. We adopt the cross-entropy loss function as an objective loss $L$ for optimization. Training is carried out for $200$ epochs. The weight decay is set to $5\times 10^{-4}$. We gather the choice of $\gamma$ in Table~\ref{tab:gamma}.

\begin{figure}[!t] 
	\centering 
	\subfigure[LeNet-5]{\includegraphics[width=0.4\textwidth]
		{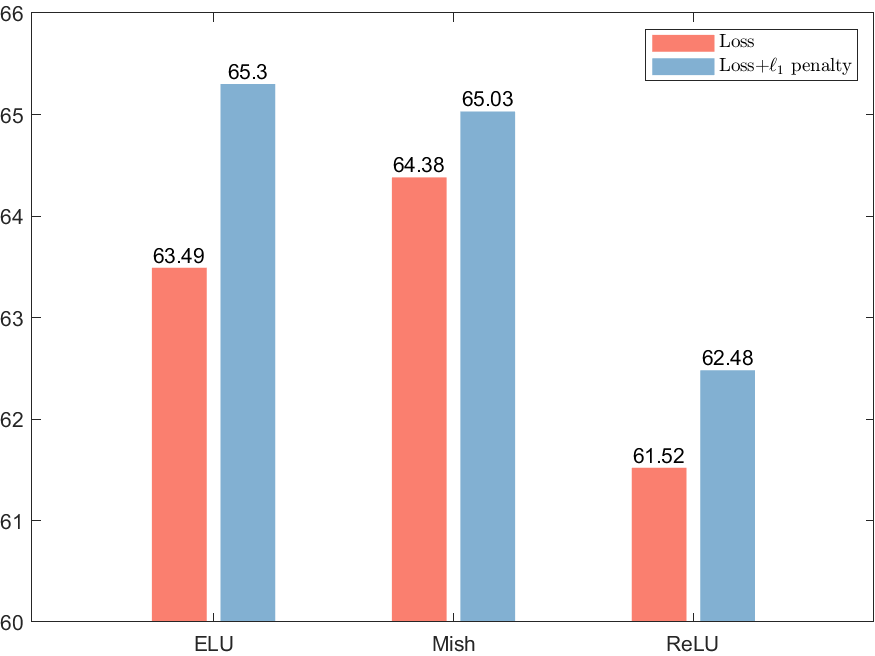}}
	\quad
	\subfigure[VGG11]{\includegraphics[width=0.4\textwidth]
		{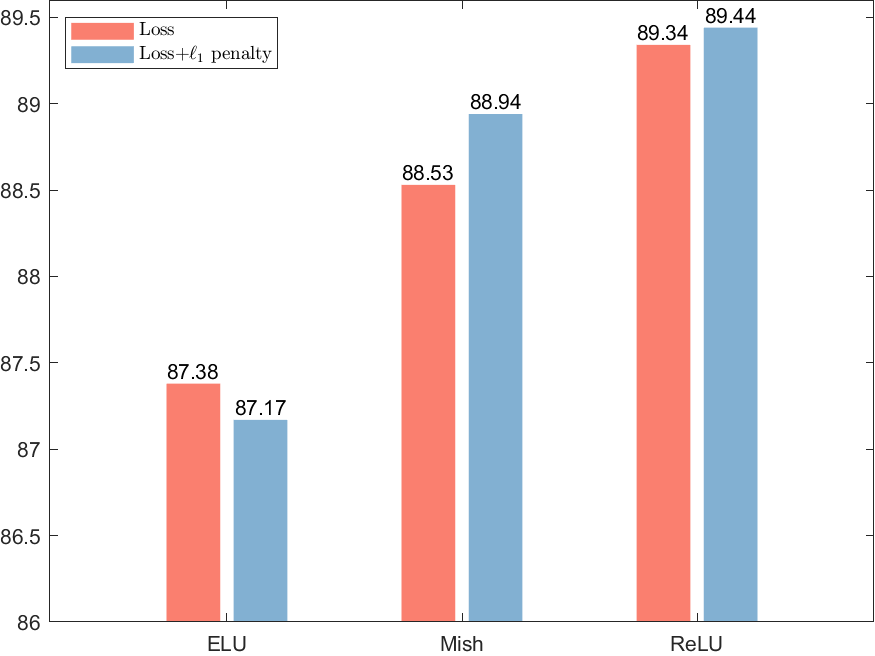}}
	\caption{Test accuracy over CIFAR10.} 
	\label{fig:classification_acc} 
\end{figure}

The test accuracy of each model is collected in Figure~\ref{fig:classification_acc}. We can see that in most cases the $\ell_1$ penalty helps to improve the classification accuracy. Besides, the performance of these activation functions is all within a small range, which indicates that neural networks may work as a deep sparse coding solver no matter what activation functions are used. We also collect the training loss in Figure~\ref{fig:classification_loss}, and plot first $50$ training loss in Figure~\ref{fig:classification_loss_50}. These figures show that for the same activation function, training losses have a similar shape, which is more clear in the VGG11 architecture. In Figure~\ref{fig:classification_loss_50}, we can see that for the same activation function, the losses decay very fast to a similar value. Together with the test accuracy, increasing the sparsity could be a safe choice for improving the performance since training loss is unlikely to change much.

\begin{figure}[!t] 
	\centering 
	\subfigure[LeNet-5]{\includegraphics[width=0.4\textwidth]
		{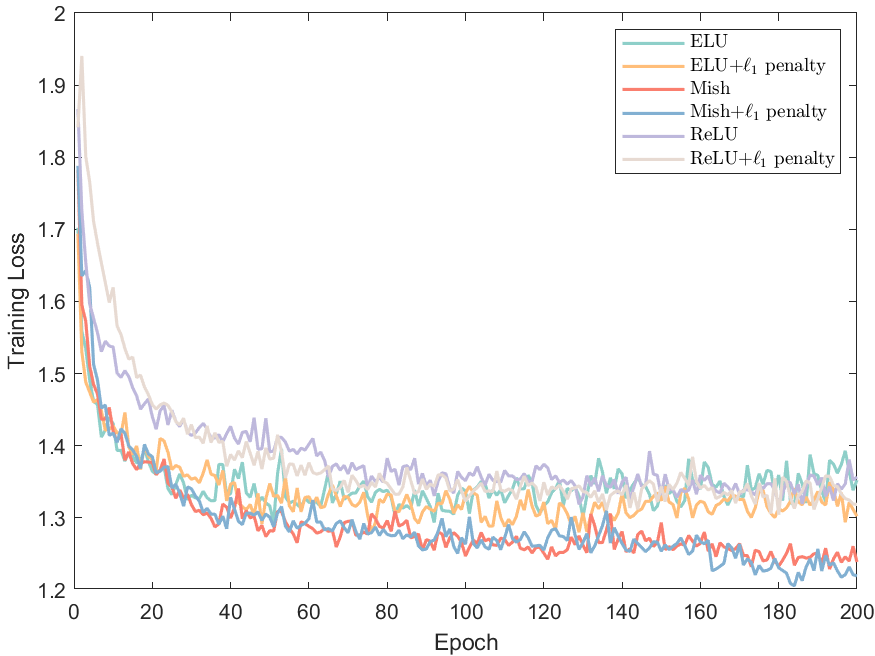}}
	\quad
	\subfigure[VGG11]{\includegraphics[width=0.4\textwidth]
		{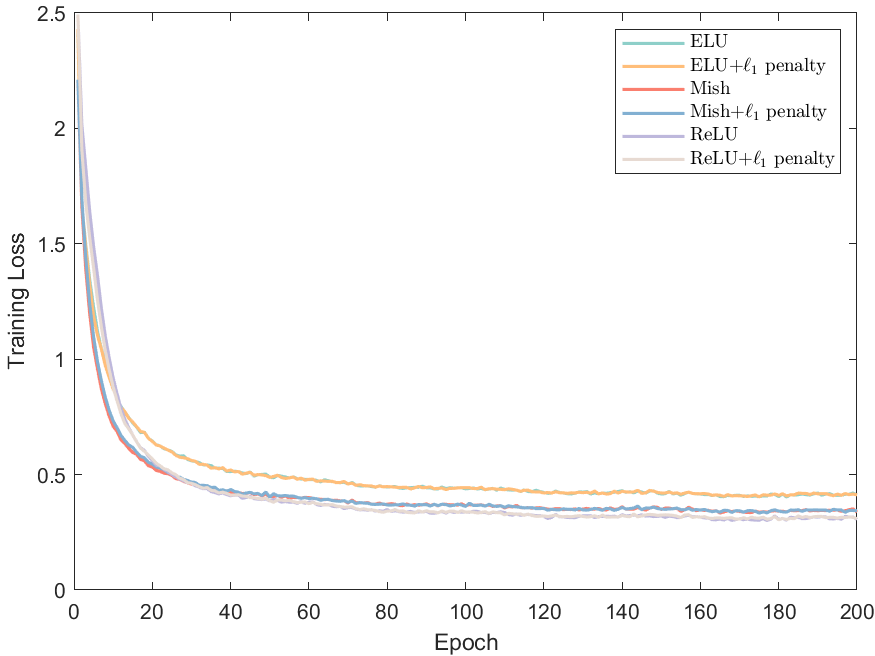}}
	\caption{Training loss over CIFAR10.} 
	\label{fig:classification_loss} 
\end{figure}

\begin{figure}[htbp] 
	\centering 
	\includegraphics[width=0.5\textwidth]
	{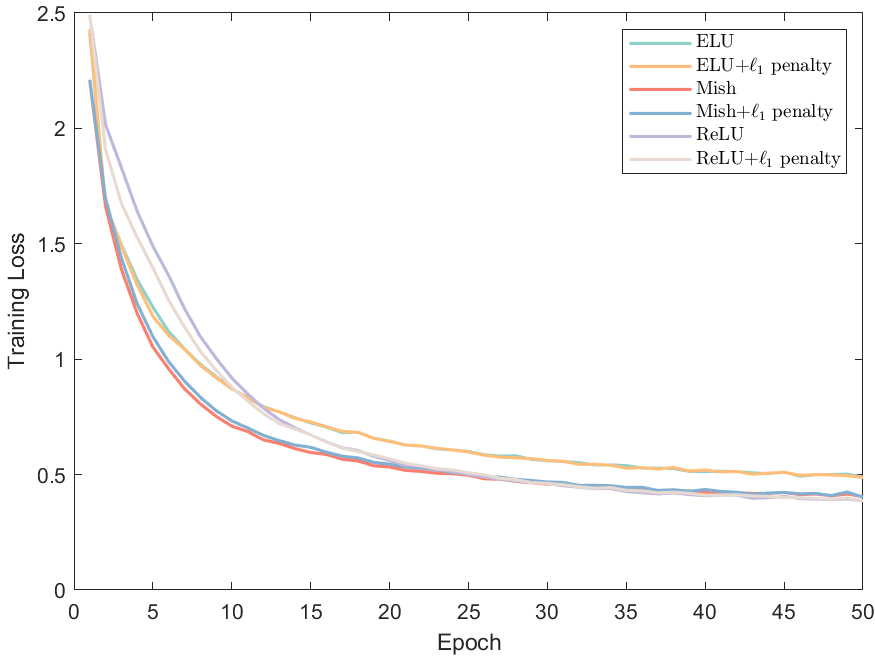}
	\caption{Training loss of VGG11 over first $50$ epoch.}   
	\label{fig:classification_loss_50} 
\end{figure}

\subsection{Image Segmentation}

Image segmentation aims at partitioning pixels from a digital image into multiple segments. Unet \cite{Ronneberger2015Unet} is one of the most well-known deep network architectures to predict pixel-wise labels for image segmentation. The proposed encoding and decoding approach is often referred to as the U structure. Specifically, there are five encoding blocks and four decoding blocks in Unet. After it was noticed by researchers in this field, a lot of work is proposed, designing similar U structures and making neural networks great success in various fields, including computer sciences, medical imaging, and many others.


In this part, we consider $2$ benchmark datasets for pixel-wise binary classification: DUTS \cite{wang2017learning}, DUT-OMRON \cite{wang2013saliency}. DUTS has DUTS-TR for training and DUTS-TE for testing, containing 10553 and 5019 images, respectively. DUT-OMRON comprises of 5168 images for salient objective detection task. Our experiment utilizes DUTS-TR to train Unet and take others for evaluation.

During training, we use the ADAM algorithm with a mini-batch size of $4$. The learning rate is set using the cosine annealing schedule, starting with an initial value of $0.005$. We adopt the binary cross-entropy\footnote{https://pytorch.org/docs/stable/generated/torch.nn.BCELoss.html} (BCE) loss function as the objective loss $L$ for optimization. The training is conducted for $200$ epochs and we set $\gamma$ to be $10^{-3}$. With respect to the penalty term in $L_{\text{sparse}}$, we push the features from the first four encoding blocks to have smaller $\ell_1$ norm with $\omega_j := 1/\text{dim}(\b x_j)$.

To evaluate the performance of the prediction against the ground truth, we employ the following three measurements: 
\begin{align*}
	\text{PA}&:= \f{ \text{TP} + \text{TN} }{\text{TP} + \text{TN} + \text{FP} + \text{FN}}, \\
	\text{mPA}&:= \f{1}{2} \left(\f{ \text{TP}  }{\text{TP} + \text{FP} } + \f{ \text{TN}  }{\text{TN} + \text{FN} } \right), \\
	\text{mIoU}&:= \f{1}{2} \left(\f{ \text{TP} }{\text{TP} + \text{FP} + \text{FN}} + \f{ \text{TN} }{\text{TN} + \text{FP} + \text{FN}} \right). 
\end{align*}
which are based on the definition of the confusion matrix Table~\ref{tab:confusion}.
\begin{table}[]
	\centering
	\begin{tabular}{|c|cc|}
		\hline
		\diagbox{Prediction}{Label}	& Positive & Negative \\  \hline
		Positive	& TP & FP \\ 
		Negative	& FN & TN  \\ \hline
	\end{tabular}
	\caption{Confusion Matrix.}
	\label{tab:confusion}
\end{table}

Table~\ref{tab:seg_mean} collects the average performance of Unet on each dataset. With an additional $\ell_1$ penalty and the keep of other hyperparameters, the performance can be improved with most measurements. Furthermore, the standard diviation in Table~\ref{tab:seg_mean} shows that improving the sparsity could possibly yield a more stable model. Again, in Figure~\ref{fig:segloss} (a), the training loss is not greatly affected by the additional penalty term, which is similar to those observed in image classification experiments. Meanwhile, the penalty term can effectively control the $\ell_1$ norm of learned features, as shown in Figure~\ref{fig:segloss} (b). Regarding a feature $\b x_j$, we take the measurement $\|\mathcal{H}_\alpha(\b x_j)\|_0/\text{dim}(\b x_j)$ to quantify the sparsity, where the hard threshold $\mathcal{H}_\alpha$ is defined as $\mathcal{H}_\alpha(\b x_j)_i := (\b x_j)_i$ if $|(\b x_j)_i|>\alpha$ and zero otherwise. We set $\alpha = 10^{-6}$ and collect sparsity results in Figure~\ref{fig:avg-sparsity}. As the neural network deepens, the features become sparser. Even in Unet, features of deeper encoders have good sparsity. These results indicate that neural networks could benefit from sparsity. We also show three segmentation results for each dataset in Figures~\ref{fig:seg_dust} and \ref{fig:seg_dus}.

\begin{table}[!ht]
	\centering 
	\begin{tabular}{|c|c|l|l|l|}
		\hline
		\multicolumn{1}{|l|}{}     & \multicolumn{1}{l|}{} & \multicolumn{1}{c|}{\textbf{PA}} & \multicolumn{1}{c|}{\textbf{mPA}} & \multicolumn{1}{c|}{\textbf{mIoU}} \\ \hline
		\multirow{2}{*}{{\bf DUTS-TE}}   & \textbf{Unet}         & 0.9203$\pm$0.1004                & 0.8423$\pm$0.1529                 & \textbf{0.7584$\pm$0.1840}          \\ \cline{2-5} 
		& \textbf{Unet$+\ell_1$}      & \textbf{0.9224$\pm$0.0972}       & \textbf{0.8446$\pm$0.1494}        & 0.7582$\pm$0.1799                  \\ \hline
		\multirow{2}{*}{{\bf DUT-OMRON}} & \textbf{Unet}         & 0.9166$\pm$0.1097                & 0.8305$\pm$0.1749                 & 0.7503$\pm$0.2060                   \\ \cline{2-5} 
		& \textbf{Unet$+\ell_1$}      & \textbf{0.9209$\pm$0.1021}       & \textbf{0.8322$\pm$0.1724}        & \textbf{0.7524$\pm$0.2011}         \\ \hline
	\end{tabular}
	\caption{Average performance and standard deviation of Unet on each dataset.}
	\label{tab:seg_mean}
\end{table}

\begin{figure}[!t] 
	\centering 
	\subfigure[BCE Loss]{\includegraphics[width=0.4\textwidth]
		{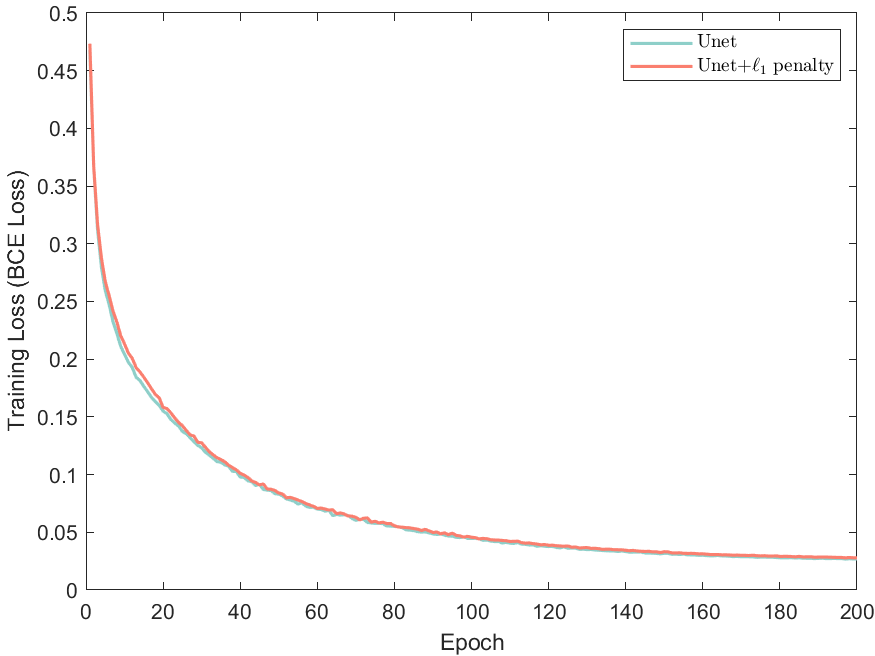}}
	\quad
	\subfigure[$\ell_1$ Loss]{\includegraphics[width=0.4\textwidth]
		{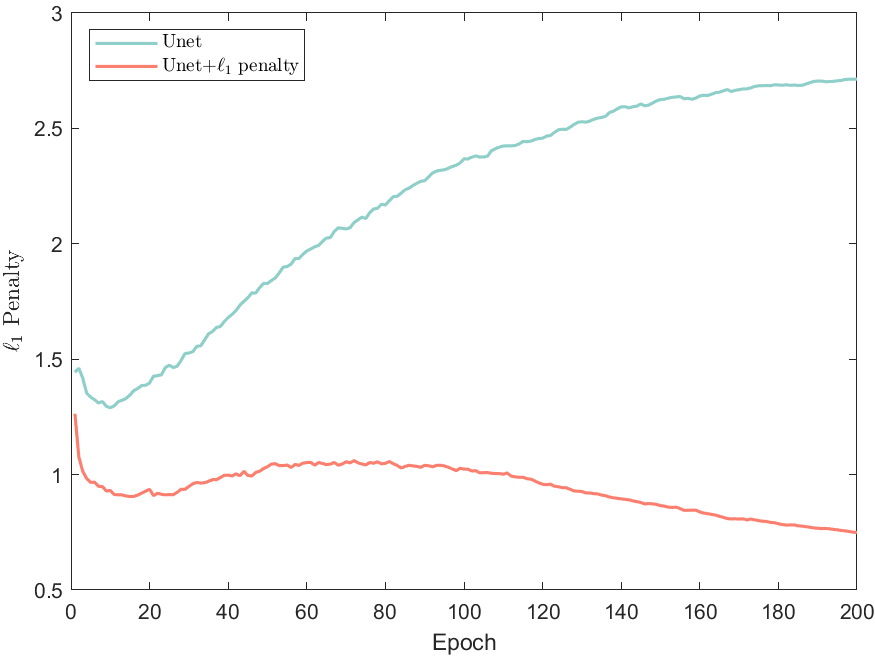}}
	\caption{Training loss and $\ell_1$ penalty ($\sum_j \omega_j \|\b x_j\|_1$) during training.} 
	\label{fig:segloss} 
\end{figure}

\begin{figure}[htbp] 
	\centering 
	\includegraphics[width=0.5\textwidth]
	{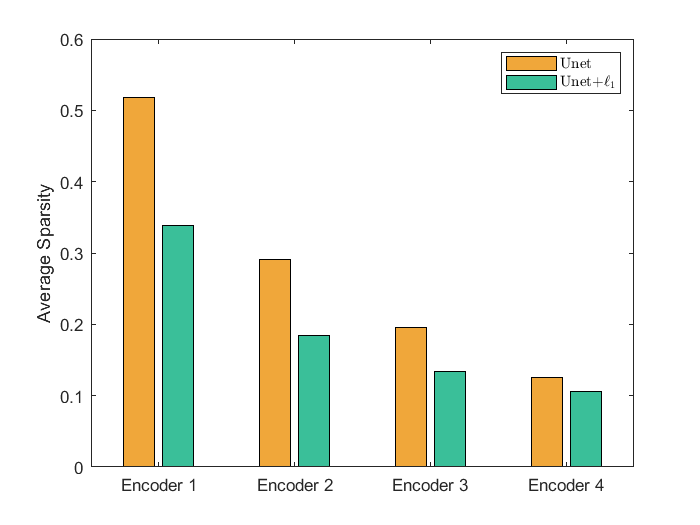}
	\caption{Average sparsity of features produced by Unet over DUT-OMRON.}   
	\label{fig:avg-sparsity} 
\end{figure}

\begin{figure}[!t] 
	\centering 
	\subfigure[Image a]{\includegraphics[height = 2cm, width=2cm]
		{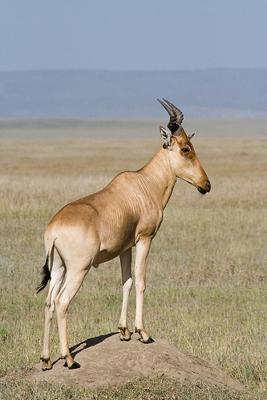}}
	\subfigure[Unet]{\includegraphics[height = 2cm, width=2cm]
		{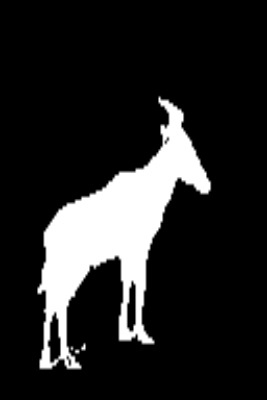}}
	\subfigure[Unet+$\ell_1$]{\includegraphics[height = 2cm, width=2cm]
		{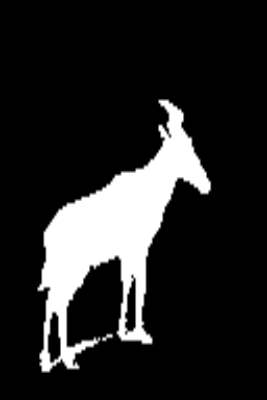}}
	\subfigure[Label]{\includegraphics[height = 2cm, width=2cm]
		{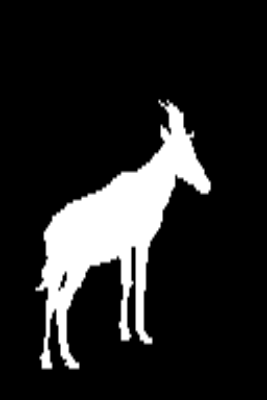}}
	\\
	\subfigure[Image b]{\includegraphics[height = 2cm, width=2cm]
		{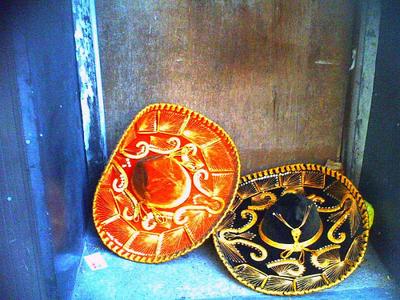}}
	\subfigure[Unet]{\includegraphics[height = 2cm, width=2cm]
		{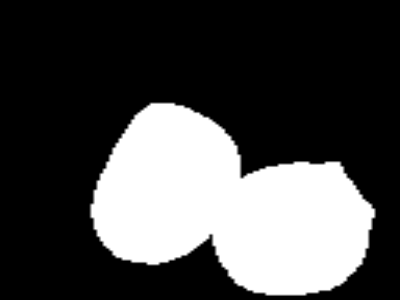}}
	\subfigure[Unet+$\ell_1$]{\includegraphics[height = 2cm, width=2cm]
		{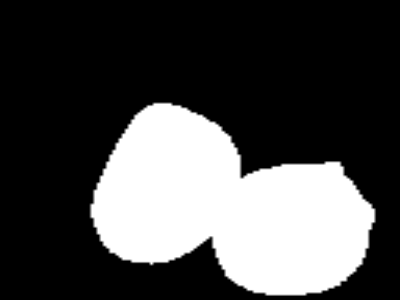}}
	\subfigure[Label]{\includegraphics[height = 2cm, width=2cm]
		{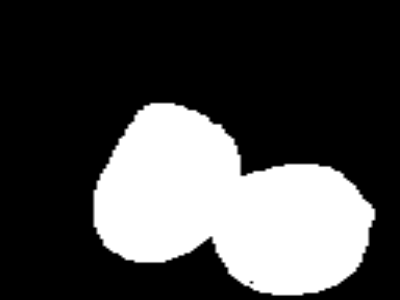}}
	\\
	\subfigure[Image c]{\includegraphics[height = 2cm, width=2cm]
		{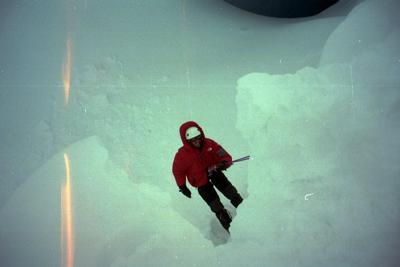}}
	\subfigure[Unet]{\includegraphics[height = 2cm, width=2cm]
		{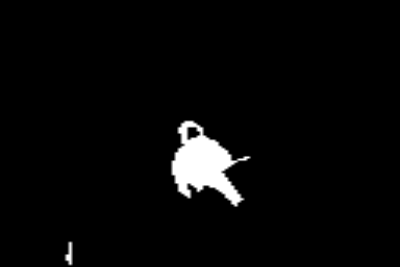}}
	\subfigure[Unet+$\ell_1$]{\includegraphics[height = 2cm, width=2cm]
		{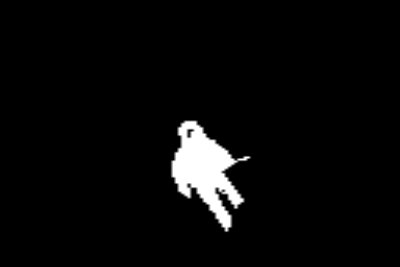}}
	\subfigure[Label]{\includegraphics[height = 2cm, width=2cm]
		{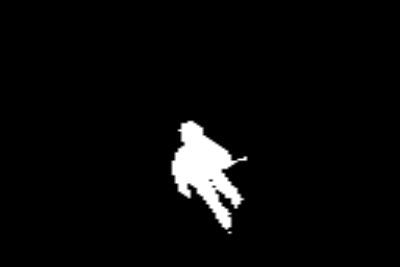}}


	\caption{Three segmentation results over DUTS-TE.} 
	\label{fig:seg_dust} 
\end{figure}

\begin{figure}[!t] 
	\centering 
	\subfigure[Image a]{\includegraphics[height = 2cm, width=2cm]
		{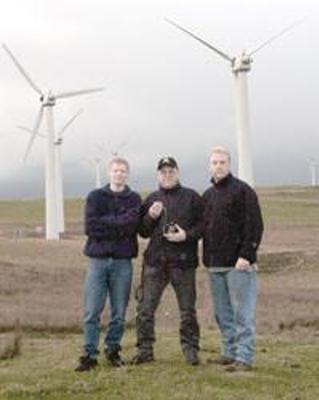}}
	\subfigure[Unet]{\includegraphics[height = 2cm, width=2cm]
		{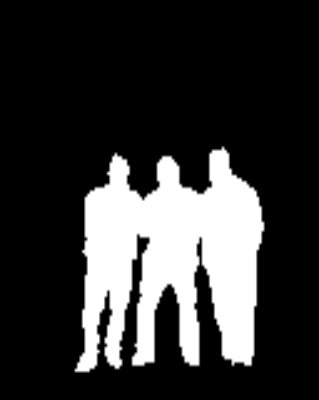}}
	\subfigure[Unet+$\ell_1$]{\includegraphics[height = 2cm, width=2cm]
		{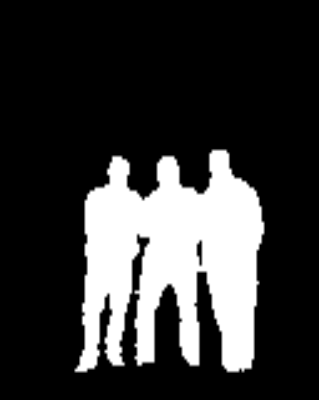}}
	\subfigure[Label]{\includegraphics[height = 2cm, width=2cm]
		{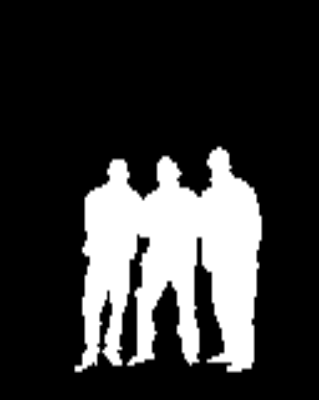}}
	\\
	\subfigure[Image b]{\includegraphics[height = 2cm, width=2cm]
		{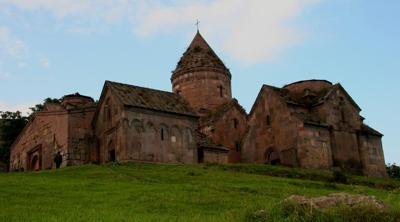}}
	\subfigure[Unet]{\includegraphics[height = 2cm, width=2cm]
		{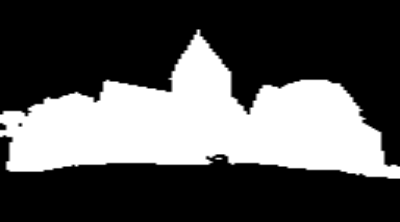}}
	\subfigure[Unet+$\ell_1$]{\includegraphics[height = 2cm, width=2cm]
		{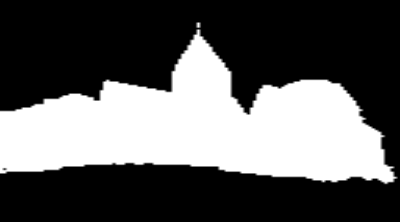}}
	\subfigure[Label]{\includegraphics[height = 2cm, width=2cm]
		{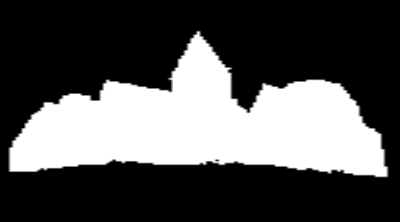}}
	\\
	\subfigure[Image c]{\includegraphics[height = 2cm, width=2cm]
		{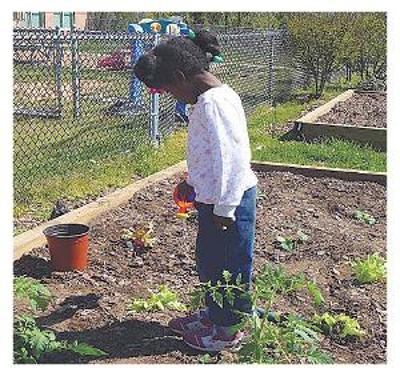}}
	\subfigure[Unet]{\includegraphics[height = 2cm, width=2cm]
		{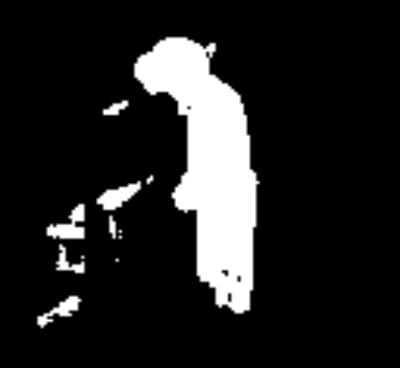}}
	\subfigure[Unet+$\ell_1$]{\includegraphics[height = 2cm, width=2cm]
		{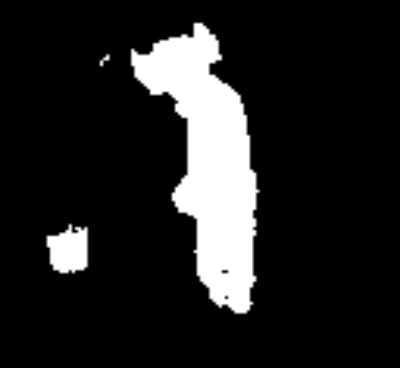}}
	\subfigure[Label]{\includegraphics[height = 2cm, width=2cm]
		{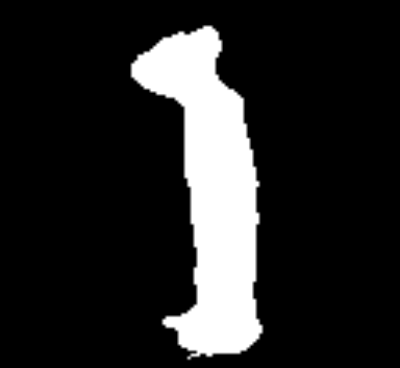}}
	\caption{Three segmentation results over DUT-OMRON.} 
	\label{fig:seg_dus} 
\end{figure}


\section{Conclusion}
In this work, we have studied the ability of convolutional neural networks to learn deep sparse features. We first discussed the uniqueness and stability of deep sparse coding models under a general setting and explored different ways to relax the required conditions. We then provided a convergence analysis for approximating deep sparse features, which offers theoretical support for the observed sparsity in the internal representations of CNNs.
Furthermore, we generalize our results to neural networks with various activation functions and structures, which suggests that deep sparse feature extraction may be a widely occurring phenomenon in many popular neural network architectures.
Motivated by these insights into the sparsity of CNN features, we proposed a $\ell_1$ regularization strategy to train such models. The numerical experiments conducted indicate that the penalty $\ell_1$ can potentially push CNNs to extract sparser features, which in turn can improve the performance in downstream tasks.

\section*{Acknowledgments}
		
The research and the work described in this paper was partially supported  by grants from the Research Grants Council of the Hong Kong Special Administrative Region, China [Projects Nos. CityU 11303821, CityU 11315522, CityU 11308020]. The work of D. X. Zhou was partially supported by Discovery Project (DP240101919) of the Australian Research Council. J. Li gratefully acknowledges support from the project CONFORM, funded by the 
German Federal Ministry of Education and Research (BMBF), as well as from the 
German Research Foundation under the Grant DFG-SPP-2298. Furthermore, J. Li 
acknowledges additional support from the project ``Next Generation AI Computing 
(gAIn)'', funded by the Bavarian Ministry of Science and the Arts and the Saxon 
Ministry for Science, Culture, and Tourism.

\bibliographystyle{plain}
\small
\bibliography{papers1}

\appendix

\section*{Appendix}
This appendix contains detailed theoretical proofs. Appendix~\ref{app:1} presents the proofs of Theorem~\ref{l1DSC}, Theorem~\ref{thm:layer-by-layer}, and the corresponding corollaries. We also provide a detailed discussion in Appendix~\ref{app2} on how to generalize results of ReLU-activated CNNs to CNNs with other activation functions as stated in Theorem~\ref{thm:relutype} and Theorem~\ref{thm:nonrelu}.

\section{Theoretical results of ReLU-activated CNNs for solving deep sparse coding}\label{app:1}
In the following, we first examine the case $J=1$, which corresponds to the classical sparse coding problem.
Let the linear inverse problem $\b y=\b D\b x^* + \bm\varepsilon$ with a given dictionary $\b D \in \RR^{m\times d}$. The signal $\b x^*$ and the observation noise $\bm\varepsilon$ are assumed to be sampled from
\begin{align*}
	(\b x^*, \bm\varepsilon) \in \mathbb{X}(B,\lambda,\delta) := \{ (\b x^*, \bm\varepsilon): \|\b x^*\|_{\infty}\leq B, \|\b x^*\|_0 
	\leq \lambda, \|\bm\varepsilon \|_1 \leq \delta  \}.
\end{align*}

A LISTA-CP sequence $\{ \b x^{(k)} \}_{k=0}^\infty$ is defined to solve the linear inverse problem $\b y = \b D \b x^* +\bm\varepsilon$.
\begin{align}\label{LISTA-CP}
	\begin{aligned}
		\b x^{(0)} &:= 0, \\
		\b x^{(k+1)} &:= \t_{\theta^{(k)}}(\b x^{(k)} + (\b W^{(k)})^\top (\b y - \b D\b x^{(k)})), k = 1,2,3,\dots,
	\end{aligned}
\end{align}
where the soft thresholding function $\t_{\alpha}$ is defined component-wise as
\begin{align*}
	\t_\alpha(\b x)_i = \text{sign}(x_i)(|x_i|-\alpha)_{+}.
\end{align*}
We may also write $\b x^{(k)}$ as $\b x^{(k)}(\b x^*, \bm\varepsilon)$ when we emphasize the dependence. To characterize the condition of $\{ (\b W^{(k)}, \theta^{(k)}) \}_{k=0}^\infty$ that is required for the convergence analysis, we need the concept of generalized mutual coherence.

\begin{defn}\label{def:generalize_coherence}
	The generalized mutual coherence of $\b A \in \RR^{m\times d}$, with each column being normalized, is given by
	\begin{align*}
		\tilde{\mu}(\b A) = \inf_{\b W\in \RR^{m\times d}, \b w_i^\top \b a_i = 1, i\in[d]} \left\{ \max_{i\neq j, i,j\in [d]} \left|\b w_i^\top \b a_j\right| \right\}.
	\end{align*}
\end{defn}
Based on the generalized mutual coherence, we will choose $\b W^{(k)}$ in \eqref{LISTA-CP} from the following set.
\begin{defn}
	Given $\b A$, define the following set
	\begin{align*}
		\mathcal{X}(\b A) = {\operatorname{argmin}}_{\b W \in \RR^{ m\times d}} \left \{ \max_{ij} |W_{ij}|: \b w_i^\top \b a_i = 1, i\in[d], \max_{i\neq j} |\b w_i^\top \b a_j|=\tilde{\mu}(\b A)   \right \}.
	\end{align*}
\end{defn}
The existence of $\b W$ that achieves the value $\tilde{\mu}(\b A)$ is guaranteed by the fact that $\b A$ belongs to the feasible set and the definition of $\tilde{\mu}(\b A)$ can be seen as a linear programming problem \cite{chen2018theoretical}. Now we are ready to introduce the theoretical result of LISTA-CP iteration for solving linear inverse problems.

\begin{thm}[Theorem 2 \cite{chen2018theoretical}]\label{chen18}
	Consider a linear inverse problem $\b y = \b D \b x^* + \bm\varepsilon$ where the columns of $\b D$ are normalized and $(\b x^*, \bm\varepsilon) \in \mathbb{X}(B,\lambda,\delta) $. If each $(\b W^{(k)}, \theta^{(k)})$ satisfies
	\begin{align}\label{conts}
		\b W^{(k)} \in \mathcal{X}(\b D), \quad \theta^{(k)} = \sup_{(\b x^*, \bm\varepsilon) \in \mathbb{X}(B,\lambda,\delta) } \left \{ \tilde{\mu}(\b D) \| \b x^{(k)}(\b x^*,\bm\varepsilon) - \b x^* \|_1  \right\}  + C_{\b W^{(k)}} \delta,
	\end{align}
	where $C_{\b W^{(k)}} = \max_{i\in [m], j\in [d]} |W^{(k)}_{ij}|$, then the sequence $\{\b x^{(k)} \}_{k=0}^\infty$ generated by \eqref{LISTA-CP} for any $(\b x^*, \bm\varepsilon) \in \mathbb{X}(B, \lambda, \delta)$ has the same support as $\b x^*$, i.e. $\supp  \b x^{(k)} = \supp  \b x^*$ and satisfies
	\begin{align*}
		\|\b x^{(k)}(\b x^*,\bm\varepsilon) - \b x^*\|_2 \leq \lambda B e^{-ck} + C \delta, \quad k=1,2,\dots
	\end{align*}
	where $c>0$, $C>0$ are some constants that depend only on $\b D$ and $\lambda$.
\end{thm}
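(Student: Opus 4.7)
The plan is to unroll the LISTA-CP update with $\b y=\b D\b x^*+\b\varepsilon$ substituted in, and then simultaneously prove (i) by induction that no spurious support is created, and (ii) by iterating a contractive recursion on $\|\b x^{(k)}-\b x^*\|_1$ that the error decays exponentially with an additive noise floor. First I would write the pre-thresholding argument
\[
\b u^{(k)} \;:=\; \b x^{(k)}+(\b W^{(k)})^\top(\b y-\b D\b x^{(k)}) \;=\; \b x^* + \bigl(\b I-(\b W^{(k)})^\top\b D\bigr)(\b x^{(k)}-\b x^*) + (\b W^{(k)})^\top \b\varepsilon,
\]
noting that the condition $\b W^{(k)}\in\mathcal X(\b D)$ forces the diagonal of $\b I-(\b W^{(k)})^\top\b D$ to vanish and bounds each off-diagonal entry by $\tilde\mu(\b D)$, while each entry of $(\b W^{(k)})^\top\b\varepsilon$ is controlled by $C_{\b W^{(k)}}\|\b\varepsilon\|_1\le C_{\b W^{(k)}}\delta$.

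For the support claim, I would proceed by induction on $k$, with base case $\b x^{(0)}=\b 0\subseteq\supp \b x^*=:S$. Assuming $\supp \b x^{(k)}\subseteq S$, the sum defining $u_i^{(k)}$ for $i\notin S$ runs only over indices $j\in S$, yielding
\[
|u_i^{(k)}| \;\le\; \tilde\mu(\b D)\,\|\b x^{(k)}-\b x^*\|_1 + C_{\b W^{(k)}}\delta \;\le\; \theta^{(k)}
\]
by the very definition of $\theta^{(k)}$ (taking a supremum over $(\b x^*,\b\varepsilon)\in\mathbb X(B,\delta)$). The soft-thresholding operator $\t_{\theta^{(k)}}$ then kills the coordinate, so $\supp \b x^{(k+1)}\subseteq S$, proving the no-false-positive half; the matched-sign half follows from the same magnitude comparison once the error is small enough relative to $\min_{i\in S}|x_i^*|$.

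For the rate, I would bound $\|\b x^{(k+1)}-\b x^*\|_1$ coordinate-wise on $S$. Using $|\t_\theta(u)-x^*|\le|u-x^*|+\theta$ and summing,
\[
\|\b x^{(k+1)}-\b x^*\|_1 \;\le\; (s-1)\tilde\mu(\b D)\,\|\b x^{(k)}-\b x^*\|_1 + s\,C_{\b W^{(k)}}\delta + s\,\theta^{(k)},
\]
where $s=\|\b x^*\|_0$. Taking the supremum over $(\b x^*,\b\varepsilon)$ on both sides and inserting the formula for $\theta^{(k)}$ collapses this to
\[
\sup\|\b x^{(k+1)}-\b x^*\|_1 \;\le\; (2s-1)\tilde\mu(\b D)\,\sup\|\b x^{(k)}-\b x^*\|_1 + 2s\,C_{\b W^{(k)}}\delta.
\]
The sparsity hypothesis $s<\tfrac12(1+1/\tilde\mu(\b D))$ is exactly what makes the multiplier $q:=(2s-1)\tilde\mu(\b D)$ strictly less than $1$, so the geometric series closes. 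Unrolling from $\b x^{(0)}=\b 0$ (which gives $\|\b x^{(0)}-\b x^*\|_1\le sB$) yields $\sup\|\b x^{(k)}-\b x^*\|_1\le sBq^k+\tfrac{2sC}{1-q}\delta$, and finally $\|\cdot\|_2\le\|\cdot\|_1$ together with $q^k=e^{-ck}$ with $c=-\log q>0$ delivers the stated bound $sB e^{-ck}+C\delta$.

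The main obstacle I anticipate is not the contraction step itself but the \emph{self-referential} structure of the definition of $\theta^{(k)}$: it is specified via a supremum of $\|\b x^{(k)}-\b x^*\|_1$ taken over the very class $\mathbb X(B,\delta)$ on which one is trying to prove the bound. Handling this cleanly requires the induction to carry forward a \emph{uniform} (worst-case over the class) error bound rather than a pointwise one, and to verify that the support-preservation step and the contraction step remain consistent under this uniformization --- i.e., that the $\theta^{(k)}$ chosen to guarantee no false positives is also the one that enters the recursion with the right constants. Once that bookkeeping is organized, the rest is straightforward induction and a geometric sum.
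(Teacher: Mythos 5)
Your proposal is correct and follows essentially the same route as the source: this statement is imported by the paper from Theorem~2 of \cite{chen2018theoretical} without its own proof, and your argument (inductive no-false-positive step via the self-referential threshold $\theta^{(k)}$, coordinatewise $\ell_1$ contraction on the support with factor $(2s-1)\tilde{\mu}(\b D)<1$ under $s<\tfrac12(1+1/\tilde{\mu}(\b D))$, then geometric unrolling and $\|\cdot\|_2\leq\|\cdot\|_1$) is precisely that proof, and it is also the template the paper reuses in its Appendix~B generalization to non-ReLU activations, where the same uniform-over-$\mathbb{X}(B,\delta)$ bookkeeping you flagged appears in the recursion for $\sup\|\b x^{(k)}-\b x^*\|_1$.
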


Notice that once the space $\mathbb{X}(B,\lambda,\delta)$ considered is fixed, then we can also fix the choice of $(\b W^{(k)}, \theta^{(k)})$, $k=1,2,\dots$ which is independent of the signal $\b x^*$ and noise $\bm\varepsilon$. Under this observation, we can show that there exists a CNN that can also solve the linear inverse problem.

\begin{lem}[CNNs realize LISTA-CP sequences]\label{cnn-lista}
	Let $K, \lambda \in \NN$, $B, \delta>0$, and $\b D \in \RR^{m \times d}$. There exists a CNN $\Phi$ with kernel size $s$, depth $O\left(K\log_s(d+m)\right)$ and number of weights $O\left( K(m+d)^2 \right)$ such that for any $(\b x^*, \bm\varepsilon) \in \mathbb{X}(B,\lambda,\delta)$, we have $\Phi(\b y) = \b x^{(K)}(\b x^*, \bm \varepsilon)$, where $\b y = \b D \b x^* + \bm \varepsilon$.
\end{lem}
\begin{proof}
	Denote $\b I_d \in \RR^{d\times d}$, $\b O_{m \times d} \in \RR^{m \times d}$ the identity matrix and zero matrix, respectively. We briefly denote $\b O_d := \b O_{d\times d}$ and $\b 0_d := \b O_{d\times1}$ for simplicity. We shall use $\b 1_{d} := (1)_{i\in [d]}$ to represent constant $1$ vectors.
	
	First, the shrinkage operator $\t_\alpha$ can be expressed by ReLU $\sigma(x) = \max \{x,0\}$ as
	\begin{align*}
		\begin{aligned}
			\t_{\alpha}(\b x) &= \sigma(\b x-\alpha \b 1_d) - \sigma(-\b x - \alpha \b 1_d) \\
			&=\left( \b I_d, -\b I_d \right)\sigma \left(
			\begin{pmatrix}
				\b I_d  \\
				-\b I_d
			\end{pmatrix}
			\b x -\alpha
			\begin{pmatrix}
				 \b 1_d \\
				\b 1_d
			\end{pmatrix}
			\right)
			, \quad\forall \b x \in \RR^d.
		\end{aligned}
	\end{align*}
	
	Then the definition \eqref{LISTA-CP} of $\b x^{(k+1)}$ implies
	\begin{align*}
		\begin{aligned}
			\b x^{(k+1)}
			&=
			\t_{\theta^{(k)}} \left (
			\begin{pmatrix}
				\b I_d-  (\b W^{(k)})^\top \b D & (\b W^{(k)})^\top
			\end{pmatrix}
			\begin{pmatrix}
				\b x^{(k)} \\
				\b y
			\end{pmatrix}
			\right) \\
			&= \left( \b I_d, -\b I_d \right)\sigma \left(
			\begin{pmatrix}
				\b I_d  \\
				-\b I_d
			\end{pmatrix}
			\begin{pmatrix}
				\b I_d-  (\b W^{(k)})^\top \b D & (\b W^{(k)})^\top
			\end{pmatrix}
			\begin{pmatrix}
				\b x^{(k)} \\
				\b y
			\end{pmatrix}
			-\theta^{(k)}
			\begin{pmatrix}
				\b 1_d \\
				\b 1_d
			\end{pmatrix}
			\right) \\
			&=
			\left( \b I_d, -\b I_d \right)\sigma \left(
			\begin{pmatrix}
				\b I_d-  (\b W^{(k)})^\top \b D & (\b W^{(k)})^\top  \\
				-\b I_d+  (\b W^{(k)})^\top \b D & -(\b W^{(k)})^\top
			\end{pmatrix}
			\begin{pmatrix}
				\b x^{(k)} \\
				\b y
			\end{pmatrix}
			-\theta^{(k)}
			\begin{pmatrix}
				\b 1_d \\
				\b 1_d
			\end{pmatrix}
			\right).
		\end{aligned}
	\end{align*}
	Thus, we have for $k>1$,
	\begin{align*}
		\begin{aligned}
			\begin{pmatrix}
				\b x^{(k+1)} \\
				\b y
			\end{pmatrix}
			&=
			\begin{pmatrix}
				\b I_d &-\b I_d & \b O_{d\times m} \\
				\b O_{m \times d} & \b O_{m\times d} & \b I_m
			\end{pmatrix}	
			\left(
			\sigma \left(
			\begin{pmatrix}
				\b I_d-  (\b W^{(k)})^\top \b D & (\b W^{(k)})^\top  \\
				-\b I_d+  (\b W^{(k)})^\top \b D & -(\b W^{(k)})^\top \\
				\b O_{m\times d} & \b I_m
			\end{pmatrix}
			\begin{pmatrix}
				\b x^{(k)} \\
				\b y
			\end{pmatrix}
			-
			\begin{pmatrix}
				\theta^{(k)} \b 1_d \\
				\theta^{(k)} \b 1_d \\
				-M \b 1_m
			\end{pmatrix}
			\right)  +
			\begin{pmatrix}
				\b 0_d \\
				\b 0_d \\
				-M \b 1_m
			\end{pmatrix}
			\right)
			\\
			&:= \b A_k \sigma \left( \b B_k
			\begin{pmatrix}
				\b x^{k} \\
				\b y
			\end{pmatrix}
			- \b c_k \right) + \b e_k,
		\end{aligned}
	\end{align*}
	where $\b A_k \in \RR^{(d+m) \times (2d+m) }$, $\b B_k \in \RR^{ (2d+m) \times (d+m)}$, $\b c_k \in \RR^{2d+m}$, $\b e^{(k)} \in \RR^{d+m}$, and $M$ is the minimum infinity norm of all the observation signals $$M := \inf \{C\in \RR: \|\b y\|_\infty \leq C,  \b y =\b  D\b x^*+\bm\varepsilon, (\b x^*, \bm\varepsilon) \in \mathbb{X}(B, s, \delta) \}.$$
	
	For $k=1$, we have $\b x^{(1)} = \t_{\theta^{(0)}} \left( (\b W^{(0)})^\top \b y \right) $ and hence
	\begin{align*}
		\begin{aligned}
			\begin{pmatrix}
				\b x^{(1)} \\
				\b y
			\end{pmatrix}
			&=
			\begin{pmatrix}
				\b I_d &-\b I_d & \b O_{d\times m} \\
				\b O_{m \times d} & \b O_{m \times d} & \b I_m
			\end{pmatrix}	
			\left(
			\sigma \left(
			\begin{pmatrix}
				(\b W^{(0)})^\top   \\
				- (\b W^{(0)})^\top  \\
				\b I_{m}
			\end{pmatrix}
			\b y
			-
			\begin{pmatrix}
				\theta^{(0)} \b 1_d \\
				\theta^{(0)} \b 1_d \\
				-M \b 1_m
			\end{pmatrix}
			\right) +
			\begin{pmatrix}
				\b 0_d \\
				\b 0_d \\
				-M \b 1_m
			\end{pmatrix} 
			\right)
			\\
			&:= \b A_1 \sigma \left( \b B_1
			\b y- \b c_1 \right) + \b e_1,
		\end{aligned}
	\end{align*}
	where $\b A_1 \in \RR^{(d+m) \times (2d+m) }$, $\b B_1 \in \RR^{ (2d+m) \times m}$, $\b c_1\in \RR^{2d+m}$, and $\b e_{1} \in \RR^{d+m}$.
	
	Relying on the result from \cite{FLZ}, which shows that any linear transformation can be implemented by CNNs, it is easy to see that each $\b A_k$ can be realized by a CNN with kernel size $s$, depth $O\left(\log_s (d+m)\right)$ and number of weigths $O((m+d)^2)$ and it is similar for $\b B_{k}$. Concatenating these CNNs together, we obtain a CNN $\Phi$ with depth $O\left(K\log_s(d+m)\right)$ and number of weights $O\left( K(m+d)^2 \right)$ such that $\Phi(\b y) = \b x^{(K)}$.
\end{proof}

Taken together, \thmref{chen18} and \lemref{cnn-lista} show that CNNs can serve as solvers for linear inverse problems. Building on these findings, we now proceed to prove \thmref{l1DSC}.

\begin{proof}[Proof of \thmref{l1DSC}]
The definition of $(DSC_{ 0,\bm\lambda}^{\bm \varepsilon})$ problem implies that there exists a set of vectors $\{ \bm\xi_j \in \RR^{d_{j-1}} \}_{j=1}^J$ such that
	\begin{align*}
		\b y &= \b D_1 \b x_1 + \bm\xi_1, \quad \|\bm \xi_1\|_2 \leq \varepsilon_1, \\
		\b x_1 &= \b D_2 \b x_2 + \bm\xi_2, \quad \|\bm \xi_2\|_2 \leq \varepsilon_2, \\
		&\vdots \\
		\b x_{J-1} &= \b D_J \b x_J + \bm\xi_J, \quad \|\bm \xi_J\|_2 \leq \varepsilon_J.
	\end{align*}
	
	Notice that it is straightforward to rewrite the observation $\b y$ as follows 
	\begin{align*}
		\b y = \b D_{[J]} \b x_J + \sum_{j=1}^J \left( \prod_{i=1}^{j-1}  \b D_i\right) \bm \xi_j,
	\end{align*}
	where we define $\prod_{i=1}^{0}  \b D_i = \b I$.
    Let $\bm \xi_{[J]}:= \sum_{j=1}^J \left( \prod_{i=1}^{j-1}  \b D_i\right) \bm \xi_j $. Notice that $\|\bm \xi_j\|_1 \leq \sqrt{d_{j-1}} \|\bm \xi_j\|_2 \leq \sqrt{d_{j-1}}\varepsilon_j$. 
    The solution $\b x_J$ and noise $\bm\xi_{[J]}$ belong to the collection 
	$$(\b x_J, \bm \xi_{[J]}) \in \mathbb{X}\left(B_J, \lambda_J, \sum_{j=1}^J \sqrt{d_{j-1}} \varepsilon_j \left \| \prod_{i=1}^{j-1} \b D_i \right\|_1 \right) .$$
	Using \thmref{chen18} and \lemref{cnn-lista}, we obtain a CNN $\phi_J$ with kernel size $s$, depth $O(K\log_s(d_0+d_J))$ and number of weights $O(K(d_0+d_J)^2)$ such that
	\begin{align*}
		\|\phi_J(\b y) - \b x_J  \|_2 \leq \lambda_J B_J e^{-c K} + C \sum_{j=1}^J \sqrt{d_{j-1}} \varepsilon_j \left \| \prod_{i=1}^{j-1} \b D_i \right\|_1,
	\end{align*}
	for some positive constant $c, C>0$ and $\supp \phi_J(\b y) = \supp \b x_J$. 
	
	Define $s$-th restricted isometry constant\footnote{The upper bound of $\delta_s $ is given by \cite[Proposition 6.2]{Holger2013book}.} $\delta_s:= \delta_s(\b A)$ of a matrix $\b A$ as the smallest $\delta>0$ satisfying
	\begin{align*}
		(1-\delta)\|\b x\|_2^2 \leq \|\b A \b x\|_2^2 \leq (1+\delta)\|\b x\|_2^2, \quad  \forall \|\b x\|_0 \leq s.
	\end{align*}
	Then we can get the following bound for any $\ell\leq J$
	\begin{align}\label{eq:errordec1}
		\begin{aligned}
			\left \| \prod_{k=\ell}^{J} \b D_{k} \phi_J(\b y) - \prod_{k=\ell}^{J} \b D_{k} \b x_{J} \right \|_2 &\leq \left(1+ \delta_{2\lambda_J} \left(\prod_{k=\ell}^{J} \b D_{k}\right)  \right)^{1/2}  \left( \lambda_J B_J e^{-c K} + C \sum_{j=1}^J \sqrt{d_{j-1}} \varepsilon_j \left \| \prod_{i=1}^{j-1} \b D_i \right\|_1 \right).
		\end{aligned}
	\end{align}
	
	Notice that
	\begin{align}\label{eq:errordec2}
		\begin{aligned}
			\left\|\b x_{\ell-1} - \prod_{k=\ell}^{J} \b D_{k} \b x_{J}  \right\|_2 = \left\| \b D_{[\ell, J]} \b x_{J} + \sum_{k=\ell}^J \left(\prod_{i=\ell}^{k-1} \b D_i \right) \bm \xi_k  - \b D_{[\ell,J]} \b x_{J}  \right\|_2  \leq \sum_{k=\ell}^J 
            \varepsilon_{k} \left\| \prod_{i=\ell}^{k-1} \b D_i \right\|_2.
		\end{aligned}
	\end{align}
	Then combining \eqref{eq:errordec1} and \eqref{eq:errordec2}, we get
	\begin{align*}
		\left \| \prod_{k=\ell}^{J} \b D_{k} \phi_J(\b y) - \b x_{\ell-1} \right \|_2 
		&\leq \left \| \prod_{k=\ell}^{J} \b D_{k} \phi_J(\b y) - \prod_{k=\ell}^{J} \b D_{k} \b x_{J} \right \|_2  + \left\| \prod_{k=\ell}^{J} \b D_{k} \b x_{J}  - \b x_{\ell-1}  \right\|_2 \\
		&\leq C_{1}  e^{-c K} + C_{2} \|\bm \varepsilon\|_\infty,
	\end{align*}
	where $C_1$ and $C_2$ are chosen as
	\begin{align*}
		C_1 &:= \max_{\ell \in [J]}\, \lambda_J B_J \left(1+\delta_{2\lambda_J}\left(\prod_{k=\ell}^{J} \b D_{k}\right) \right)^{1/2} ,\\
        C_2 &:= \max_{\ell \in [J]} \left\{  \sum_{k=\ell}^J  \left\| \prod_{i=\ell}^{k-1} \b D_i \right\|_2, \left(1+  \delta_{2\lambda_J}\left(\prod_{k=\ell}^{J} \b D_{k}\right) \right)^{1/2} C\sum_{j=1}^J \sqrt{d_{j-1}} \left \| \prod_{i=1}^{j-1} \b D_i \right\|_1  \right\}.
	\end{align*}
	
	Using \cite{FLZ} for representing linear transforms into CNNs, there exists a CNN $\psi_j$ with kernel size $s$, depth $O(K\log_s(d_{j-1}+d_j))$ and number of weights $O(K(d_{j-1}+d_j)^2)$ such that $\psi_j(\b z) = \b D_j \b z$. Then we have
	\begin{align*}
		\left \| \psi_{\ell}\circ \cdots \circ \psi_J \circ \phi_J(\b y) - \b x_{\ell-1} \right \|_2 
		\leq C_{1}  e^{-c K} + C_{2} \|\bm \varepsilon\|_\infty.
	\end{align*}
	Setting $\Phi:= \psi_1 \circ \psi_2 \circ \cdots \circ \psi_J \circ \phi_J$, we can conclude that $\Phi$ is a CNN with kernel size $s$, depth $O(K\log_s\prod_{j=1}^{J+1}(d_{j-1}+d_j))$ and number of weights $O(K\sum_{j=1}^{J+1}(d_{j-1}+d_j)^2)$ such that outputs of $\Phi$, denoted as $\{\tilde{\b x}_j \}$, satisfies
	\begin{align*}
			\left \| \tilde{\b x}_j - \b x_{j} \right \|_2 
		\leq C_{1}  e^{-c K} + C_{2} \|\bm \varepsilon\|_\infty, \quad j\in [J].
	\end{align*}
	The proof is complete.
\end{proof}

To show the layer-by-layer strategy, we need the following property of the LISTA-CP sequence.
\begin{lem}[Stability of LISTA-CP sequence]\label{sta-lista}
	Given $\b y , \tilde{\b y} \in  \RR^{m}$, $\b D \in \RR^{m\times d}$, and $\{ (\b W^{(k)}, \theta^{(k)}) \in \mathcal{X}(\b D) \times \RR \}_{k=0}^\infty$, if the sequence $\{ \b x^{(k)} \}_{k=0}^\infty $ is generated by
	\begin{align*}
		\begin{aligned}
			\b x^{(0)} &:= 0, \\
			\b x^{(k+1)} &:= \t_{\theta^{(k)}}(\b x^{(k)} + (\b W^{(k)})^\top (\b y - \b D\b x^{(k)})),\quad k = 1,2,3,\dots,
		\end{aligned}
	\end{align*}
	and the sequence $\{ \tilde{\b x}^{(k)} \}_{k=0}^\infty $ is generated by
	\begin{align*}
		\begin{aligned}
			\tilde{\b x}^{(0)} &:= 0, \\
			\tilde{\b x}^{(k+1)} &:= \t_{\theta^{(k)}}(\tilde{\b x}^{(k)} + (\b W^{(k)})^\top (\tilde{\b y} - \b D\tilde{\b x}^{(k)})),\quad k = 1,2,3,\dots,
		\end{aligned}
	\end{align*}
	then
	\begin{align*}
		\|\b x^{(k)} - \tilde{\b x}^{(k)}\|_2 \leq \left( \sum_{i=0}^{k} d^i \tilde{\mu}(\b  D)^i  \right) \sqrt{md} C_{\b W} \|\b y - \tilde{\b y}\|_2 ,
	\end{align*}
	where $C_{\b W}>0$ is a constant only dependent on $\b D$.
\end{lem}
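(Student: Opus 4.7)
The proof proceeds by induction on $k$, tracking the discrepancy $\b e^{(k)} := \b x^{(k)} - \tilde{\b x}^{(k)}$. The base case $\b e^{(0)} = 0$ is immediate since both sequences start at the zero vector. For the inductive step, I would subtract the two defining recursions and invoke the 1-Lipschitz property of the componentwise soft-thresholding operator $\t_{\theta^{(k)}}$ in $\ell_2$ (which holds regardless of the value of the threshold), yielding
\begin{align*}
\|\b e^{(k+1)}\|_2 \leq \bigl\|(\b I - (\b W^{(k)})^\top \b D)\b e^{(k)}\bigr\|_2 + \bigl\|(\b W^{(k)})^\top (\b y - \tilde{\b y})\bigr\|_2.
\end{align*}
The second summand is immediately bounded by $\|\b W^{(k)}\|_2 \|\b y - \tilde{\b y}\|_2$, and because $\b W^{(k)} \in \mathcal{X}(\b D)$ is characterized by a linear program depending only on $\b D$, there is a uniform constant $C_0 = C_0(\b D)$ with $\|\b W^{(k)}\|_2 \leq C_0$ for every $k$.

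The crux of the argument is controlling the matrix $\b M^{(k)} := \b I - (\b W^{(k)})^\top \b D$. By the constraints defining $\mathcal{X}(\b D)$ we have $(\b w_i^{(k)})^\top \b d_i = 1$ for all $i$, so $\b M^{(k)}$ has zero diagonal, and every off-diagonal entry is bounded in magnitude by $\tilde{\mu}(\b D)$. I would then exploit that every iterate emerges from soft-thresholding applied to an expression beginning from zero, so the supports of $\b x^{(k)}$ and $\tilde{\b x}^{(k)}$ (and hence of $\b e^{(k)}$) remain inside a sparse index set of cardinality $s$ depending only on $\b D$; a Gershgorin-type estimate restricted to this set gives $\|\b M^{(k)}\b e^{(k)}\|_2 \leq \rho \|\b e^{(k)}\|_2$ with $\rho = (2s-1)\tilde{\mu}(\b D) < 1$ provided the coherence condition $s < \tfrac{1}{2}(1+1/\tilde{\mu}(\b D))$ inherited from Theorem~2 of Chen et al.\ holds.

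Combining the two estimates, the inductive step reads $\|\b e^{(k+1)}\|_2 \leq \rho \|\b e^{(k)}\|_2 + C_0 \|\b y - \tilde{\b y}\|_2$, and unrolling from $\b e^{(0)} = 0$ with the geometric series bound $\sum_{j=0}^{\infty} \rho^j = 1/(1-\rho)$ yields $\|\b e^{(k)}\|_2 \leq \tfrac{C_0}{1-\rho}\|\b y - \tilde{\b y}\|_2$, giving the claimed stability with $C_{\b D} := C_0/(1-\rho)$. The main obstacle I anticipate is the sparsity-control step: unlike the setting of Theorem~2 of Chen et al., there is no ground-truth signal $\b x^*$ pinning down the supports, so one must argue uniformly over $k$ that the threshold schedule $\{\theta^{(k)}\}$ and the iteration geometry preclude activation of too many coordinates. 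I would approach this by bounding the iterates uniformly in $\ell_\infty$ in terms of $\|\b y\|_2$ and $\|\tilde{\b y}\|_2$, from which the post-thresholding support cardinality is controlled by $\b D$ alone; once this is in place, the remainder of the proof is a standard contraction-plus-perturbation induction.
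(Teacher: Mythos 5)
Your opening moves coincide exactly with the paper's: both proofs subtract the two recursions, invoke the $1$-Lipschitz property of $\t_{\theta^{(k)}}$ in $\ell_2$, and arrive at the same two-term bound $\|\b e^{(k+1)}\|_2 \leq \|(\b I - (\b W^{(k)})^\top \b D)\b e^{(k)}\|_2 + \|(\b W^{(k)})^\top(\b y - \tilde{\b y})\|_2$ followed by unrolling from $\b e^{(0)} = \b 0$. The divergence, and the genuine gap, is in how you control the matrix $\b M^{(k)} = \b I - (\b W^{(k)})^\top \b D$. You want a contraction factor $\rho = (2s-1)\tilde{\mu}(\b D) < 1$ obtained by restricting a Gershgorin estimate to a sparse index set containing the supports of all iterates. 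But the lemma as stated gives you nothing to produce such a set: the thresholds $\theta^{(k)}$ are arbitrary reals (not tied to the schedule of Theorem~2 of Chen et al.), $\b y$ and $\tilde{\b y}$ are arbitrary vectors with no underlying sparse signal, and no coherence condition on $\b D$ is assumed. With a small or zero threshold the iterates are generically fully dense, so $s$ can be as large as $d$ and $\rho$ need not be below $1$. Your proposed remedy --- a uniform $\ell_\infty$ bound on the iterates implying a support-cardinality bound depending only on $\b D$ --- does not close this: boundedness of entries says nothing about how many of them are nonzero. Even in the one place the lemma is actually used (the proof of Theorem~\ref{l1DSC}), the perturbed input $\tilde{\b y}$ is the inexact output of the previous block, which falls outside the hypotheses under which Theorem~\ref{chen18} guarantees support recovery, so you cannot import the sparse-support property for the $\tilde{\b x}^{(k)}$ sequence from there either.

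The paper sidesteps this entirely by not seeking a contraction: it bounds $\|\b M^{(k)}\|_2 \leq \|\b M^{(k)}\|_F \leq d\,\tilde{\mu}(\b D)$ (using that $\b M^{(k)}$ has zero diagonal and off-diagonal entries at most $\tilde{\mu}(\b D)$ in magnitude, which is the part of your argument that does survive) and $\|\b W^{(k)}\|_F \leq \sqrt{md}\,C_{\b W}$, then sums the resulting geometric series with ratio $d\,\tilde{\mu}(\b D)$, arriving at $\|\b e^{(k)}\|_2 \leq \frac{\sqrt{md}\,C_{\b W}\left((d\,\tilde{\mu}(\b D))^{k} - 1\right)}{d\,\tilde{\mu}(\b D) - 1}\|\b y - \tilde{\b y}\|_2$. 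This is cruder --- the prefactor grows with $k$ whenever $d\,\tilde{\mu}(\b D) > 1$, which is in tension with the lemma's claim that $C_{\b D}$ is independent of the iteration count, but it is harmless in the application since $k$ is capped at $K$ there. If you drop the unsupported sparsity step and replace your $\rho$ by the Frobenius-norm bound, your induction becomes the paper's proof; as written, the contraction step would fail.
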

\begin{proof}
    We denote $C_{\b W} := C_{\b W^{(k)}}$ because the definition of $\X(\b D)$ implies that $C_{\b W^{(k)}} = C_{\b W^{(k')}}$ for any $k\neq k'$.
	Since $\| \t_{\alpha}(\b x) - \t_{\alpha}(\tilde{\b x}) \|_2 \leq \|\b x - \tilde{\b x}\|_2$ for any $\b x, \tilde{\b x} \in \RR^d$, we have
	\begin{align*}
		\|\b x^{(k+1)} - \tilde{\b x}^{(k+1)}\|_2
		&\leq \left \| \left(\b I-(\b W^{(k)})^\top \b D \right)(\b x^{(k)} - \tilde{\b x}^{(k)}) \right \|_2 +\left \|\left(\b W^{(k)}\right)^\top (\b y-\tilde{\b y}) \right \|_2 \\
		&\leq \left \| \b I-(\b W^{(k)})^\top \b D \right \|_F \| \b x^{(k)} - \tilde{\b x}^{(k)}  \|_2 + \| \b W^{(k)} \|_F \|\b y - \tilde{\b y}\|_2 \\
		&\leq d \tilde{\mu}(\b D) \| \b x^{(k)} - \tilde{\b x}^{(k)}  \|_2 + \sqrt{md}C_{\b W} \|\b y - \tilde{\b y}\|_2 \\
		&\leq (d \tilde{\mu}(\b D))^{k+1} \|\b x^{(0)} - \tilde{\b x}^{(0)}\|_2 +  \left( \sum_{i=0}^{k} d^i \tilde{\mu}(\b  D)^i  \right) \sqrt{md} C_{\b W} \|\b y - \tilde{\b y}\|_2 \\
		&= \left( \sum_{i=0}^{k} d^i \tilde{\mu}(\b  D)^i  \right) \sqrt{md} C_{\b W} \|\b y - \tilde{\b y}\|_2 ,
	\end{align*}
	where the second inequality follows from $\|\b A\b x\|_2 \leq \|\b A\|_2 \|\b x\|_2$ and $\|\b A\|_2 \leq \|\b A\|_F$ for any matrix $\b A \in \RR^{m\times d}$ and $\b x \in \RR^d$ and in the last inequality we use the initialization $\b x^{(0)} = \tilde{\b x}^{(0)} = 0$.
\end{proof}


\lemref{sta-lista} is instrumental in quantifying how errors propagate from shallow layers to deeper ones. In the following, we establish the proof of Theorem~\ref{thm:layer-by-layer}.

\begin{proof}[Proof of Theorem~\ref{thm:layer-by-layer}]
    Using an argument similar to the proof of \thmref{l1DSC}, there exists a set of vectors $\{ \bm\xi_j \in \RR^{d_{j-1}} \}_{j=1}^J$ such that
	\begin{align*}
		\b y &= \b D_1 \b x_1 + \bm\xi_1, \quad \|\bm \xi_1\|_1 \leq  \sqrt{d_0} \varepsilon_1 , \\
		\b x_1 &= \b D_2 \b x_2 + \bm\xi_2, \quad \|\bm \xi_2\|_1 \leq \sqrt{d_1}\varepsilon_2, \\
		&\vdots \\
		\b x_{J-1} &= \b D_J \b x_J + \bm\xi_J, \quad \|\bm \xi_J\|_1 \leq \sqrt{d_{J-1}}\varepsilon_J.
	\end{align*}
	
    Let $k_j \in \NN$, $j=1,\dots,J$. Using \lemref{cnn-lista} and \thmref{chen18}, we establish the existence of a CNN $\phi_{j}$ with kernel size $s$, depth $O(k_j\log_s(d_{j-1}+d_j))$ and number of weights $O(k_j(d_{j-1}+d_j)^2)$ to solve the inverse problem $\b x_{j-1} = \b D_{j} \b x_{j} + \bm \xi_j $, $j=1,\dots J$ (where $\b x_0 := \b y$), satisfying
    \begin{align}\label{eq:sta_of_phi}
        \begin{aligned}
            \| \phi_{j}(\b x_{j-1}) - \b x_{j} \|_2 
            &\leq \lambda_{j} B e^{-c_{j}k_j} + C_j \varepsilon_j \sqrt{d_{j-1}} , \\
             \|\phi_{j}(\b z) - \phi_{j}(\b z') \|_2 &\leq \bar{C}_{j} \|\b z - \b z'\|_2 ,\quad \forall z, z' \in \RR^{d_{j-1}},
        \end{aligned}
    \end{align}
    where the second inequality follows from \lemref{sta-lista}, $C_j>0$ depends only on $\lambda_j$ and $\b D_j$, and the constant $c_j, \bar C_j$ are given by
    \begin{align*}
        c_{j} 
        &:= -\ln \left(2\lambda_{j}\tilde{\mu}(\b D_{j})-\tilde{\mu}(\b D_{j}) \right), \\
        \bar C_{j} 
        &:= \left( \sum_{i=0}^{k_j} d_{j}^i \tilde{\mu}(\b  D_{j})^i  \right) \sqrt{d_{j-1} d_j } C_{\b W_{j}}, 
    \end{align*}
    for some constant $C_{\b W_{j}} \geq 0$, where $c_j$ is proved in \cite{chen2018theoretical}.
    
    Define $\phi^{(j)}$ as
    \begin{align*}
        \phi^{(1)} &= \phi_1, \\
        \phi^{(j)} &= \phi_j \circ \phi^{(j-1)}.
    \end{align*}
    Then using \eqref{eq:sta_of_phi}, we obtain for $j>1$,
    \begin{align*}
        \| \phi^{(j)}(\b y) - \b x_j \|_2 
        &\leq \| \phi^{(j)}(\b y) - \phi_{j}(\b x_{j-1}) \|_2 + \| \phi_{j}(\b x_{j-1}) - \b x_{j} \|_2 \\
        &\leq \bar C_{j} \| \phi^{(j-1)}(\b y) - \b x_{j-1} \|_2 +  \lambda_{j} B e^{-c_{j}k_j} + C_j \varepsilon_j \sqrt{d_{j-1}} .
    \end{align*}
    This implies
    \begin{align*}
        \begin{aligned}
            \| \phi^{(j)}(\b y) - \b x_j \|_2 
            &\leq  \bar C_{j} \| \phi^{(j-1)}(\b y) - \b x_{j-1} \|_2 +  \lambda_{j} B e^{-c_{j}k_j} + C_j \varepsilon_j \sqrt{d_{j-1}},  \\
            \bar C_j\cdot \| \phi^{(j-1)}(\b y) - \b x_{j-1} \|_2 
            &\leq \bar C_j \cdot \left(\bar C_{j-1} \| \phi^{(j-2)}(\b y) - \b x_{j-2} \|_2 + \lambda_{j-1} B e^{-c_{j-1}k_{j-1}} + C_{j-1}\varepsilon_{j-1}\sqrt{d_{j-2}} \right), \\
            &\vdots \\
            \prod_{i=2}^j \bar C_i \| \phi^{(1)}(\b y) - \b x_{1} \|_2 
            &\leq \prod_{i=2}^j \bar C_i \left(  \lambda_{1} B e^{-c_{1}k_1} +  C_1 \varepsilon_1 \sqrt{d_0} \right) .
        \end{aligned}
    \end{align*}
    
    Summarizing the above inequalities, we get
    \begin{align}\label{eq:orginal_bound}
        \begin{aligned}
            \| \phi^{(j)}(\b y) - \b x_j \|_2 
            &\leq \sum_{i=1}^j \prod_{m=i+1}^j \bar C_m \left( \lambda_{i} B e^{-c_{i} k_i} + C_i\varepsilon_i\sqrt{d_{i-1}} \right) \\
            &\leq \sum_{i=1}^j \prod_{m=i+1}^j \bar C_m \left( \lambda_{i} B e^{-c k_i} + C_i\varepsilon_i\sqrt{d_{i-1}} \right)
        \end{aligned}
    \end{align}
    where we define $\prod_{m=i+1}^j \bar C_m := 1$ for any $i\geq j$ and $c := \min \{ c_i, i=1,\dots,J\}$.
    
    Since $0 \leq \tilde{\mu}(\b D_j) \leq \mu(\b D_j) \leq 1$ and $\ln d_j \leq C$ for any $j$, the constant $\bar C_j \leq 2 C_{\b W_j} d_j^{k_j} \sqrt{d_{j-1}d_j} $ and hence we derive
    \begin{align}\label{eq:CM}
    \begin{aligned}
        \prod_{m=i+1}^j \bar C_m &\leq \exp\left\{\sum_{m=i+1}^j k_m\ln d_m \right \} \prod_{m=i+1}^j 2 C_{\b W_m} \sqrt{d_{m-1}d_m}  \\
        &\leq \exp\left\{C\sum_{m=i+1}^j k_m \right \} \prod_{m=i+1}^j 2 C_{\b W_m} \sqrt{d_{m-1}d_m}  \\
        &:= \tilde{C}_i \exp\left\{C\sum_{m=i+1}^j k_m \right \} .
    \end{aligned}
    \end{align}
    where we denote $\tilde C_i := \prod_{m=i+1}^j 2 C_{\b W_m} \sqrt{d_{m-1}d_m}$.

Let $a = 1 + 4 \frac{C}{c} $ and define the sequence \( \{k_j\} \) as
$k_j =\lceil a^{J-j}k/c \rceil $ for any integer $k\geq c$. For any \( 1 \leq i < j \leq J \), consider the summation term:  
$$
\sum_{m=i+1}^j k_m \leq 2 \sum_{m=i+1}^j \frac{a^{J-m}k}{c} \leq  \frac{a^{J-i-1}}{c} \cdot \frac{2k}{1 - a^{-1}} = \frac{2 a^{J-i}k}{c(a-1)}.
$$  
Thus the following inequality holds
\begin{align}\label{eq:relax}
\begin{aligned}
    C \sum_{m=i+1}^j k_m - c k_i 
    &\leq \frac{2 C a^{J-i}k}{c(a-1)} - a^{J-i}k \\
    &= a^{J-i} k \left( \frac{2C}{c(a-1)} - 1 \right)  \\
    &\leq -\frac{1}{2} k.
\end{aligned}
\end{align}  

Substituting \eqref{eq:CM} and \eqref{eq:relax} into \eqref{eq:orginal_bound}, we obtain the bound
    \begin{align*}
        \| \phi^{(j)}(\b y) - \b x_j \|_2 
            &\leq \sum_{i=1}^j \left( \tilde C_i \lambda_{i} B e^{-\frac{1}{2} k} + \tilde C_i e^{ck_i - \frac{1}{2} k } C_i \varepsilon_i\sqrt{d_{i-1}} \right) \\
            &\leq \sum_{i=1}^j \left( \tilde C_i \lambda_{i} B e^{-\frac{1}{2} k} + \tilde C_i e^{\left(2 a^{J-i}-\frac{1}{2}\right ) k } C_i \varepsilon_i\sqrt{d_{i-1}} \right).
    \end{align*}
    
    Choosing $C_1^* := \max_{j=1,\dots,J} \sum_{i=1}^j \tilde C_i \lambda_i B$ and $C_2^* := \max_{j=1,\dots,J } \sum_{i=1}^j \tilde C_i C_i \sqrt{d_{i-1}} $, we obtain the bound
    \begin{align*}
        \| \phi^{(j)}(\b y) - \b x_j \|_2 
            \leq  C_1^* e^{-\frac{1}{2} k} + C_2^* \|\bm \varepsilon\|_\infty \sum_{i=1}^j e^{\left (2 a^{J-i}-\frac{1}{2}  \right) k}.
    \end{align*}

	Since all $k_j = O(k)$, it is easy to see that $\Phi:= \phi^{(J)}$ is a CNN with kernel size $s$, depth $O(k\log_s\prod_{j=1}^{J}(d_{j-1}+d_j))$ and number of weights $O(k\sum_{j=1}^{J}(d_{j-1}+d_j)^2)$. The proof is complete.
\end{proof}

The proof of \corref{cor:l2l1} follows from \thmref{thm:layer-by-layer}.

\begin{proof}[Proof of \corref{cor:l2l1}]
	Since $\b x_{j}$ is the unique solution of the sparse linear inverse problem $\b x_{j-1}= \b D_{j}\b x$ with the smallest $\ell_1$ norm, as proven in \cite[Theorem 4.5]{elad2010sparse}, we obtain $L_j(\b x) - L_j(\b x_j) \geq 0$ for any $\b x$ and $L_j(\b x_{j}) = \gamma \|\b x_{j}\|_1$. Then we can derive the following
	\begin{align*}
		L_j(\tilde{\b x}_j) - L_j(\b x_j)
		&\leq \| \b x_{j-1} - \b D_j \tilde{\b x}_j \|^2_2 + \gamma \left( \| \tilde{\b x}_j \|_1 -  \| \b x_j\|_1 \right) \\
		&\leq \| \b D_j\b x_{j} - \b D_j \tilde{\b x}_j \|^2_2 + \gamma \left( \| \tilde{\b x}_j- \b x_j \|_1 \right) \\
		&\leq \|\b D_j\|_2^2 \| \b x_{j} - \tilde{\b x}_j \|_2^2 + \gamma \sqrt{2 \lambda_j } \| \tilde{\b x}_j- \b x_j \|_2 \\
		&\leq C' e^{-2cK} + \gamma \sqrt{2\lambda_j } C_1 e^{-cK} \\
		&\leq C e^{-cK},
	\end{align*}
	for some constant $C>0$, where in the third step we use the fact that $\|\b c\|_1 \leq \sqrt{s} \|\b c\|_2$ for any $\b c\in \RR^s$ and $\| \tilde{\b x}_j \|_0\leq \lambda_j$ which follows from \thmref{chen18} and the proof of \thmref{thm:layer-by-layer}.
\end{proof}

\section{Generalizing ReLU to general activations for deep sparse coding problems}\label{app2}

Before proving \thmref{thm:relutype}, we generalize \thmref{chen18} to the following generalized ReLU-type activation functions, which further extend the concept of ReLU-type activation functions. 

\begin{defn}[Generalized ReLU-type activation functions]\label{thm:moregen_relu}
	An activation function $\rho: \RR \to \RR$ is termed a generalized $(L,\beta)$-ReLU-type activation function if it meets the following criteria:
	\begin{enumerate}
		\item[(1)] Continuity decomposition: $\rho(x) = \sigma(x) + h(x)$, where $ h(x) \in C(\RR)$;
		\item[(2)] Boundedness: $|h(x)| \leq \beta$ for any $x \in \RR$;
		\item[(3)] Lipschitzness: $|h(x) - h(y)| \leq L|x - y|$ for $x,y \in \RR$.
	\end{enumerate}
\end{defn}

The main idea of the proof is inspired by Theorem 2 \cite{chen2018theoretical}.

\begin{thm}\label{lista-nonlinear}
	Let $\rho$ be a generalized ReLU-type activation function satisfying the conditions of Definition \ref{thm:moregen_relu}. Consider a sparse coding problem $\b y = \b D \b{x}^* + \bm\varepsilon$ where columns of $\b D$ are normalized and $(\b x^*, \bm\varepsilon) \in \mathbb{X}(B,\lambda,\delta) $. Let the sequence $\{ \b x^{(k)} \}_{k=0}^\infty$ be generated from the following iteration
	\begin{align*}
		\b x^{(k+1)} = \rho \left( \b x^{(k)} + (\b W^{(k)})^T  (\b y - \b D \b x^{(k)}) - \theta^{(k)} \b 1 \right) - \rho \left( -\b x^{(k)} - (\b W^{(k)})^T (\b y - \b D\b x^{(k)}) - \theta^{(k)} \b 1 \right),
	\end{align*}
	with $\b{x}^{(0)} = \b{0}$ and $\b W^{(k)}$ and $\theta^{(k)}$ defined similarly to \eqref{conts}.

	Then with an additional condition $\lambda < \min \left\{\f{1}{2}\left(1+ \f{1}{ \mu(\b D)} \right),  \f{1}{2-2L}\left( \f{1}{\tilde \mu(\b D)} - 2L d \right)  \right\} $, we have for any given $(\b x^*, \bm\varepsilon) \in \mathbb{X}(B,\lambda,\delta) $ that the following convergence rate holds
	\begin{align*}
		\sup_{(\b x^*, \bm\varepsilon) \in \X(B,\lambda,\delta)}\|\b x^{(k+1)} - \b x^*\|_1
		\leq \lambda Be^{-c(k+1) }   + C(\delta + \beta)  ,
	\end{align*}
	for some positive constants $c, C$.
\end{thm}

\begin{proof}
	
	Let $\rho(x) = \sigma(x) + h(x)$. Firstly, let us rewrite the input of $\rho$ in the iteration as
	\begin{align}\label{eq:non2}
		\begin{aligned}
			 x_i^{(k)} + (\b w^{(k)}_i)^T (\b y -\b  D\b x^{(k)}) &= x_i^{(k)}- (\b w_i^{(k)})^T \b D(\b x^{(k)} -\b  x^*) + (\b w_i^{(k)})^T \bm\varepsilon \\
			&= x_i^{(k)}- \sum_{j=1}^{d} (\b w_i^{(k)})^T\b  d_j(x^{(k)}_j -  x^*_j) + (\b w_i^{(k)})^T \bm\varepsilon \\
			&=  x_i^{(k)}- \sum_{j\neq i} (\b w_i^{(k)})^T \b d_j(x^{(k)}_j - x^*_j) - (x_i^{(k)} - x_i^*) +  (\b w_i^{(k)})^T \bm\varepsilon \\
			&= x_i^*- \sum_{j\neq i} (\b w_i^{(k)})^T \b  d_j(x^{(k)}_j - x^*_j) + (\b w_i^{(k)})^T \bm\varepsilon .
		\end{aligned}
	\end{align}
	Fixing an $i \notin \supp \b x^*$, that is, $x^*_i = 0$, we derive the upper bound of \eqref{eq:non2}
	\begin{align}\label{eq:non1}
		\begin{aligned}
			&\left|x_i^{(k)} + (\b w^{(k)}_i)^T (\b y - \b D\b x^{(k)}) \right|\\
			&= \left|- \sum_{j\neq i} (\b w_i^{(k)})^T \b d_j(x^{(k)}_j - x^*_j) + (\b w_i^{(k)})^T \bm  \varepsilon \right|  \\
			&\leq \tilde{\mu}(\b D) \|\b  x^{(k)} - \b x^* \|_1 + C_{\b W^{(k)}} \delta   \\
			&\leq \theta^{(k)}.
		\end{aligned}
	\end{align}
    This indicates that
        \begin{align}\label{eq:threshold_rho}
            \t_{\theta^{(k)}} \left ( x^{(k)}_i +  (\b w_i^{(k)})^T  (\b y - \b D \b x^{(k)}) \right ) = 0, \quad \forall i \notin \supp \b x^*.
        \end{align}
	For simplicity, let us denote
	\begin{align*}
		H(\b x^{(k)}; \b W^{(k)}, \theta^{(k)})_i = h \left(  x^{(k)}_i + (\b w_i^{(k)})^T  (\b y - \b D\b x^{(k)}) - \theta^{(k)}  \right) - h \left( - x^{(k)}_i - (\b w_i^{(k)})^T (\b y - \b D\b x^{(k)}) - \theta^{(k)}  \right).
	\end{align*}
	Then, employing \eqref{eq:non1} and \eqref{eq:threshold_rho}, we have for any fixed $i \notin \supp \b x^*$
	\begin{align}\label{eq:non4}
		\begin{aligned}
			\left|x^{(k+1)}_i - x_i^* \right| &\leq \left | \sigma \left( x^{(k)}_i + (\b w_i^{(k)})^T  (\b y - \b D\b x^{(k)}) - \theta^{(k)}  \right) - \sigma \left( -x^{(k)}_i - (\b w_i^{(k)})^T (\b y - \b D\b x^{(k)}) - \theta^{(k)}  \right) \right | \\
			&\quad + \left| H(\b x^{(k)}; \b W^{(k)}, \theta^{(k)})_i \right| \\
			&\leq \left | \t_{\theta^{(k)}}\left( x^{(k)}_i + (\b w_i^{(k)})^T  (\b y - \b D\b x^{(k)}) \right) \right| + 2L \theta^{(k)} \\
			&= 2L\theta^{(k)},
		\end{aligned}
	\end{align}
    where in the second step, we use the Lipschitz property of $h(\cdot)$ and \eqref{eq:non1}, and in the last step we use \eqref{eq:threshold_rho}.
	
	Based on \eqref{eq:non2}, we can reformulate $x_i^{(k+1)}$ as
	\begin{align*}
		\begin{aligned}
			x^{(k+1)}_i
			= \t_{\theta^{(k)}}\left( x_i^*- \sum_{j\neq i} (\b w_i^{(k)})^T \b  d_j(x^{(k)}_j - x^*_j) + (\b w_i^{(k)})^T \bm\varepsilon \right) + H(\b x^{(k)}; \b W^{(k)}, \theta^{(k)})_i,
		\end{aligned}
	\end{align*}
	or
    \begin{align}\label{eq:non3}
		\begin{aligned}
			x^{(k+1)}_i -  H(\b x^{(k)}; \b W^{(k)}, \theta^{(k)})_i
			= \t_{\theta^{(k)}}\left( x_i^*- \sum_{j\neq i} (\b w_i^{(k)})^T \b  d_j(x^{(k)}_j - x^*_j) + (\b w_i^{(k)})^T \bm\varepsilon \right) .
		\end{aligned}
	\end{align}
	
	Define the subgradient $\partial \ell_1(x)$ of $|x|$ as
	\begin{align*}
		\partial \ell_1(x)=
		\begin{cases}
			\sign (x), & x \neq 0,\\
			[-1,1], & x = 0.
		\end{cases}
	\end{align*}
	Then \eqref{eq:non3} implies that for any $i$
	\begin{align*}
		x^{(k+1)}_i - H(\b x^{(k)}; \b W^{(k)}, \theta^{(k)})_i  \in  x_i^*- \sum_{j\neq i} (\b w_i^{(k)})^T \b d_j(x^{(k)}_j - x^*_j) + (\b w_i^{(k)})^T\b  \varepsilon - \theta^{(k)}\partial \ell_1\left(x^{(k+1)}_i - H(\b x^{(k)}; \b W^{(k)}, \theta^{(k)})_i\right) ,
	\end{align*}
	or equivalently,
	\begin{align*}
    \begin{aligned}
        x^{(k+1)}_i-x_i^* \in  &- \sum_{j\neq i} (\b w_i^{(k)})^T \b d_j(x^{(k)}_j - x^*_j) + (\b w_i^{(k)})^T \bm\varepsilon - \theta^{(k)}\partial \ell_1\left(x^{(k+1)}_i - H(\b x^{(k)};\b  W^{(k)}, \theta^{(k)})_i\right) \\
        &+ H(\b x^{(k)};\b  W^{(k)}, \theta^{(k)})_i.
    \end{aligned}
	\end{align*}
	This ensures the following upper bound of $x^{(k+1)}_i-x_i^*$
	\begin{align}\label{eq:non5}
    \begin{aligned}
        \left| x^{(k+1)}_i-x_i^* \right|
		&\leq  \left | \sum_{j\neq i} (\b w_i^{(k)})^T \b d_j(x^{(k)}_j - x^*_j) \right |  + C_{\b W^{(k)}} \delta + \theta^{(k)} +  2\beta \\
        &\leq \tilde{\mu}(\b D)  \sum_{j\neq i} \left | x^{(k)}_j - x^*_j \right |  + C_{\b W^{(k)}} \delta + \theta^{(k)} +  2\beta,
    \end{aligned}
	\end{align}
    where we utilize the definition \eqref{conts} of $\b W^{(k)}$ and properties $|h(x)| \leq \beta $ for any $x \in \RR$ and $\|\bm \varepsilon \|_1 \leq \delta$.
    
	Now we can split $\left \|\b x^{(k+1)}- \b x^* \right\|_1$ based on the support of $\b x^*$ by using \eqref{eq:non5} and \eqref{eq:non4} 
	\begin{align}\label{eq:l1upper}
    \begin{aligned}
        \left \|\b x^{(k+1)}- \b x^* \right\|_1
		&= \sum_{i \in \supp \b x^* }\left| x^{(k+1)}_i-x_i^* \right| + \sum_{i \notin \supp \b x^* }\left| x^{(k+1)}_i \right|  \\
		&\leq \tilde{\mu}(\b D) \sum_{i \in \supp \b x^* } \sum_{j\neq i} \left |  x^{(k)}_j - x^*_j \right |  + \lambda C_{\b W^{(k)}} \delta + \lambda\theta^{(k)}+ 2\lambda \beta + 2L(d-\lambda) \theta^{(k)} \\
		&\leq \tilde{\mu}(\b D) \lambda \left \| \b x^{(k)} -\b x^* \right \|_1 + \left(2Ld+ (1-2L)\lambda\right) \theta^{(k)} + \lambda C_{\b W }\delta + 2\lambda\beta.
    \end{aligned}
	\end{align}
    Here we use $C_{\b W} := C_{\b W^{(k)}}$ due to the fact that $C_{\b W^{(k)}} = C_{\b W^{(k')}}$ for any $k\neq k'$.
    
	Substituting the definition \eqref{conts} of $\theta^{(k)}$ into \eqref{eq:l1upper}, we immediately derive
	\begin{align}\label{eq:non6}
		\begin{aligned}
			&\sup_{(\b x^*, \bm\varepsilon) \in \mathbb{X}(B,\lambda,\delta) } \left \|\b x^{(k+1)}- \b x^* \right\|_1 \\
			&\leq \tilde{\mu}(\b D) \left(2Ld + (2-2L)\lambda\right) \sup_{(\b x^*, \bm\varepsilon) \in \mathbb{X}(B,\lambda,\delta) } \left \| \b x^{(k)} -\b x^* \right \|_1 + \left(2Ld + (2-2L)\lambda\right)C_{\b W} \delta + 2\lambda\beta.
		\end{aligned}
	\end{align}
	Denote $\alpha:= \tilde{\mu} (\b D)\left(2Ld + (2-2L)\lambda\right) $. By induction, we obtain
	\begin{align*}
		\begin{aligned}
			\sup_{(\b x^*, \bm\varepsilon) \in \mathbb{X}(B,\lambda,\delta) } \left \|\b x^{(k+1)}- \b x^* \right\|_1 \leq \alpha^{k+1} \sup_{(\b x^*, \bm\varepsilon) \in \mathbb{X}(B,\lambda,\delta) } \left \|\b x^{(0)}- \b x^* \right\|_1 + \left(\alpha C_{\b W} \delta/\tilde\mu(\b D) + 2\lambda\beta \right)  \sum_{i=0}^{k} \alpha^i.
		\end{aligned}
	\end{align*}
	Hence, when $\alpha <1$, that is, $ \lambda  < \f{1}{2-2L}\left( \f{1}{\tilde \mu(\b D)} - 2L d \right)$, we get a constant $c:= -\ln \alpha>0$ satisfying $\alpha^k = e^{-c k}$. We complete the proof by setting $C:= \max\{ \alpha C_{\b W} /\tilde\mu(\b D) , 2 \lambda \} \times \sum_{i=0}^{k} \alpha^i$.
\end{proof}


\begin{remark}\label{remark:nonlinear}
ReLU activation emerges as a special case of the generalized ReLU-type function $\rho$ with parameters $(L, \beta) = (0, 0)$. This specialization makes two difference compared to \thmref{chen18}:

\begin{enumerate}
	\item \textbf{Sparsity Condition}: The mutual coherence constraint simplifies to:
	\[
		\lambda < \frac{1}{2\tilde{\mu}(\b D)},
	\]
	which appears more restrictive than the condition $\lambda < \frac{1}{2}\left(1 + \frac{1}{\tilde{\mu}(\b D)}\right)$ established in \cite[Theorem 2]{chen2018theoretical}. However, this discrepancy can be addressed by modifying the coefficient of $\|\b x^{(k)} - \b x^*\|_1$ in \eqref{eq:l1upper} to $\tilde{\mu}(\b D)(\lambda - 1)$ since in the ReLU case, the upper bound in \eqref{eq:non4} reduces to zero and this implies that iteration recovers the support of the solution $\b x^*$. 
	\item \textbf{Convergence Equivalence}: The resulting condition aligns with \cite[Theorem 2]{chen2018theoretical} when considering the uniqueness requirement:
	\[
		\lambda < \frac{1}{2}\left(\frac{1}{\mu(\b D)} + 1\right) .
	\]
	Since $\mu(\b D) \leq \tilde{\mu}(\b D)$ by definition, our condition remains compatible with the ReLU setting.
\end{enumerate}

These observations suggest that:
\begin{itemize}
	\item Generalized ReLU-type activations maintain comparable convergence properties to ReLU.
	\item The sparsity condition $\lambda < \f{1}{2-2L}\left( \f{1}{\tilde \mu(\b D)} - 2L d \right)$ is not too strong for non-ReLU ones if $L$ is small enough.
\end{itemize}

This theorem underscores the robustness of CNN-based sparse coding frameworks across various activation choices.
\end{remark}

\begin{proof}[Proof of Theorem \ref{thm:relutype}]
The proof strategy extends the framework established in Theorem \ref{l1DSC}, building on Lemma \ref{cnn-lista} and its reliance on the identity-preserving property of ReLU. This identity-preserving property ensures feedforward propagation of inputs through subsequent layers, a critical feature we generalize to arbitrary ReLU-type functions.
 
Define the $t$-fold composition:
\[
	\rho^t(x) := \underbrace{\rho \circ \cdots \circ \rho}_{t \text{ times}}.
\]
This operator inherits three properties:
\begin{itemize}
	\item[(i)] For nonnegative inputs,$\rho^t(x) = x $, $ \forall x \geq 0 $, and $  \rho^t(0) = 0$.
	
	\item[(ii)] For $x,y \leq 0$,
		$|\rho^t(x) - \rho^t(y)| \leq L|\rho^{t-1}(x) - \rho^{t-1}(y)| \leq L^t |x - y| $. This establishes $L^t$-Lipschitz continuity on the negative domain.
 
	\item[(iii)] For $x < 0$, $|\rho^t(x)| \leq L^t \beta$,
this bound quantifies the exponential decay of the negative part magnitude with composition order $t$ if $L<1$.
\end{itemize}
Substituting $\rho^t$ as the activation function in \thmref{lista-nonlinear} produces a CNN activated by $\rho^t$ that approximates the sparse solution. 
Following the steps in \lemref{cnn-lista} and \thmref{l1DSC}, we finish the proof.
\end{proof}

Taking a similar approach, we can derive an analogous result for CNNs with generalized ReLU-type activations. Nevertheless, in this setting, even without noise, the derived error bound does not ensure that the network converges to the unique solution as the number of layers tends to infinity. It only guarantees that the obtained solution remains close to the unique one.

\begin{cor}\label{cor:relutype}
	Let $J \in \NN$, $L, \beta \geq 0$, and $d_0, d_1, \dots, d_{J+1} \in \NN$ with $d_0 = d_{J+1}$. Let $\rho$ be a generalized $(L,\beta)$ -ReLU-type activation function that meets the conditions of Definition \ref{thm:moregen_relu}. Consider a sequence of column-normalized dictionaries $\{\b D_j \in \RR^{d_{j-1} \times d_j} \}_{j=1}^J$.
    For a global observation $\b y$ generated by the $(DSC_{0,\bm\lambda}^{\bm \varepsilon})$ model, let $\{\b x_j\}_{j=1}^J$ be a solution to the corresponding $DSC_{0,\bm \lambda}^{\bm  \varepsilon}(\b y, \{\b D_j\})$ problem. Under the following conditions:
	\begin{enumerate}
		\item[(1)] $\|\b{x}_J \|_\infty \leq B_J$ for some $B_J > 0$, and
		\item[(2)] $\lambda_J <\min \left \{  \frac{1}{2}\left(1 + \frac{1}{\mu(\b D_{[J]})}\right) , \frac{1}{2-2L}\left(\frac{1}{\tilde{\mu}(\b D_{[J]})} - 2L d_{J}\right) \right \}$ (where $\tilde{\mu}$ is defined in Definition~\ref{def:generalize_coherence}),
	\end{enumerate}
	there exists a CNN activated by $\rho$ with 
		 kernel size $s $,
		 depth $O\left(K \log_s \prod_{j=1}^{J+1}(d_{j-1}+d_{j})\right)$, and
		 number of weights $O\left( K \sum_{j=1}^{J+1}(d_{j-1}+d_j)^2 \right)$
	such that its output sequence $\{\tilde{\b x}_j\}_{j=1}^J$ satisfies:
	\begin{align*}
		\|\tilde{\b x}_j - \b x_j \|_2 \leq  C_{1} e^{-c K} + C_{2}\left( \beta + \|\bm \varepsilon\|_\infty\right),
		\quad \forall j \in [J],
	\end{align*}
	where constants $c, C_{1}, C_{2} > 0$ depend solely on $\{\b D_j \}_{j=1}^J$, $\bm \lambda$, $B_J$.
\end{cor}
\begin{proof}
    The proof is the same as that of Theorem \ref{thm:relutype}.
\end{proof}

If the soft-thresholding function or ReLU can be approximated by a neural network $\phi_\rho$, then we can apply the construction from \lemref{cnn-lista}, substitute the nonlinearity with $\phi_\rho$, and obtain a comparable error bound.
\begin{proof}[Proof of \thmref{thm:nonrelu}]
    Consider two iterative sequences generated by different activation functions:
	\begin{align*}
		\b x^{(k+1)}  &= \sigma \left( \b A_1^{(k)} \b x^{(k)} + \b A_2^{(k)} \b y +\b b^{(k)}\right), \\
		\tilde{\b x}^{(k+1)} & = \phi_\rho \left( \b A_1^{(k)} \tilde{\b x}^{(k)} + \b A_2^{(k)} \b y +\b b^{(k)}\right),
	\end{align*}
	with identical initialization $\tilde{\b x}^{(0)} = \b x^{(0)} = \b 0$.
    
	Applying the Lipschitz continuity of $\sigma(\cdot)$ and the approximation bound of $\phi_\rho$ for $\sigma(\cdot)$, we can obtain a one-step error propagation 
	\begin{align*}
		\|\b x^{(k+1)} - \tilde{\b x}^{(k+1)} \|_2 &\leq \left\| \sigma \left( \b A_1^{(k)} \b x^{(k)} + \b A_2^{(k)} \b y +\b b^{(k)}\right) - \sigma \left( \b A_1^{(k)} \tilde{\b x}^{(k)} + \b A_2^{(k)} \b y +\b b^{(k)}\right) \right\|_\infty \\
		&\quad+ \left\| \sigma \left( \b A_1^{(k)} \tilde{\b x}^{(k)} + \b A_2^{(k)} \b y +\b b^{(k)}\right) - \phi_\rho \left( \b A_1^{(k)} \tilde{\b x}^{(k)} + \b A_2^{(k)} \b y +\b b^{(k)}\right) \right\|_\infty  \\
		&\leq \|\b A_1^{(k)}\|_\infty \|\b x^{(k)} - \tilde{\b x}^{(k)} \|_\infty + \delta.
	\end{align*}
	By induction, we derive the cumulative error bound:
	\begin{align}\label{eq:cumulative_error}
    \begin{aligned}
        \|\b x^{(k+1)} - \tilde{\b x}^{(k+1)} \|_\infty
		&\leq \prod_{i=1}^{k} \|\b A_1^{(i)}\|_\infty \|\b x^{(1)} - \tilde{\b x}^{(1)} \|_\infty + \delta \left( 1+ \sum_{i=0}^{k-2} \prod_{j=0}^{i} \|\b A_1^{(k-j)}\|_\infty \right) \\
		&\leq  \delta \left(1+ \sum_{i=0}^{k-1} \prod_{j=0}^{i} \|\b A_1^{(k-j)}\|_\infty\right).
    \end{aligned}
	\end{align}
	For the specific case $\b A_1^{(k)} = \b I - (\b W^{(k)})^\top \b D  $ for some $\b D \in \RR^{m \times d}$, the norm satisfies $\| \b A_1^{(k)} \|_\infty \leq  (d-1)\tilde\mu(\b D)$ by the definition of $\b W^{(k)}$.  Notice that $\|\b x\|_\infty \leq \|\b x\|_2$ for any $\b x \in \RR^d$ and that LISTA-CP \eqref{LISTA-CP} has a similar representation to $\b x^{(k+1)}$. Using the idea of \thmref{l1DSC} and the error bound \eqref{eq:cumulative_error}, we complete the proof.
\end{proof}

\end{document}